\newtheorem{defi}{Definition}
\newtheorem{rmk}{Remark}
\newtheorem{thm}{Theorem}
\newtheorem{prop}{Proposition}
\newtheorem{cor}{Corollary}
\newtheorem{exem}{Example}
\title{A Method for Image Reduction Based on a Generalization of Ordered Weighted Averaging Functions\footnote{Preprint submitted to IEEE Transactions on Fuzzy Systems.}}
\begin{document}

\author{A. Diego S. Farias \footnote{Federal University of Semi-Arid - UFERSA, Pau dos Ferros, RN, Brazil, 59.900-000, antonio.diego@ufersa.edu.br},\ 
 Valdigleis S. Costa \footnote{DIMAp, valdigleis@ppgsc.ufrn.br}, Luiz Ranyer A. Lopes \footnote{DIMAp, ranyer.lopes@gmail.com},\\ 
Benjam\'in Bedregal \footnote{DIMAp, bedregal@dimap.ufrn.br}\ \ and
Regivan H. N. Santiago\footnote{Dimap, regivan@dimap.ufrn.br}
\footnote{DIMAP: Department of Informatics and Applied Mathematics, Federal University of Rio Grande do Norte --- UFRN, Natal, RN, Brazil, 59.072-970}\\}

\maketitle

\begin{abstract}
In this paper we propose a special type of aggregation function which generalizes the notion of Ordered Weighted Averaging Function - $OW\!A$. The resulting functions are called \textbf{Dynamic Ordered Weighted Averaging Functions} --- \textbf{DYOWA}s. This generalization will be developed in such way that the weight vectors are variables depending on the input vector. Particularly, this operators generalize the aggregation functions: {\it Minimum}, {\it Maximum}, {\it Arithmetic Mean}, {\it Median} etc, which are extensively used in image processing. In this field of research two problems are considered: The determination of methods to reduce images and the construction of techniques which provide noise reduction. The operators described here are able to be used in both cases. In terms of image reduction we apply the methodology provided in \cite{Patermain}. We use the noise reduction operators obtained here to treat the images obtained in the first part of the paper, thus obtaining images with better quality.
\end{abstract}

\paragraph{Keywords:}Aggregation functions, $OW\!A$ functions, Image reduction, Noise reduction.

\section{Introduction}

Image processing has great applicability in several areas. In medicine, for example, they can be applied to: Identify tumors \cite{Joseph}; support techniques in advancing dental treatments \cite{Solanki}, etc. Such images are not always obtained with a suitable quality, and to detect the desired information, various methods have been developed in order to eliminate most of the noise contained in these images.

Another problem addressed in image processing is the decrease of resolution, usually aiming the reduction of memory consumption required for its storage \cite{Gonzales}.

There are several techniques for image reduction in the literature, more recently Paternain {\it et. al.} \cite{Patermain} constructed reduction operators using weighted averaging aggregation functions. The proposed method consists of:  (1) To reduce a given image by using a reduction operator; (2) To build a new image from the reduced one, and (3) To analyze the quality of the last image by using the measures {\it PSNR} and {\it MSIM} \cite{Gonzales}.

In this work we introduce a class of aggregation functions called: \textbf{Dynamic Ordered Weighted Averaging Function} - \textbf{(DYOWA)}. They generalize the $OW\!A$ function  introduced by Yager \cite{Yager1} and in particular the operators: {\it Arithmetic Mean}, {\it Median}, {\it Maximum}, {\it Minimum} and $cOW\!A$. We provide a range of their properties such as: idempotence, symmetry and homogeneity as well two methods \footnote{These methods were implemented by using Java 1.8.0$\_$31 software in a 64 bits MS Windows machine.}: \textbf{(1)} for image reduction and \textbf{(2)} for noise treatment.

This paper is structured in the following way: \textsc{Section 2} provides some basics of Aggregation Functions Theory. \textsc{Section 3} introduces Dynamic Ordered Weighted Averaging functions, shows some examples and properties, and introduces a particular $DYOW\!A$ function, called $\mathbf{H}$, which will be fundamental for this work. In \textsc{Section 4} we provide an application of $DYOW\!A$'s to image reduction and in \textsc{Section 5}, we show that $DYOW\!A$ functions are able to treat images with noise, aiming to improve the reduction method used in \textsc{Section 4}. Finally, section \textsc{Section 6} gives the final remarks of this work.

\section{Aggregation Functions}

Aggregation functions are important mathematical tools for applications in several fields: Information fuzzy \cite{Dubois1}; Decision making \cite{Yager2,Zhou,Chen,Bustince,Patermain2}; Image processing \cite{Patermain,Joseph,Beliakov2}  and Engineering \cite{Liang}. In this section we introduce them together with examples and properties. We also present a special family of aggregation functions called \textit{Ordered Weighted Averaging} - \textbf{OWA} and show some of its features.

\subsection{Definition and Examples}\label{subs1}
Aggregation functions associate each entry ${\bf x}$ with $n$ arguments in the closed interval $[0,1]$ an output value also in the interval $[0,1]$; formally we have:

\begin{defi}\hypertarget{def1}
	An $n$-ary aggregation function is a mapping $f:[0,1]^n\rightarrow[0,1]$, which associates each $n$-dimensional vector ${\bf x}$ to a single value $f({\bf x})$ in the interval $[0,1]$ such that:
	\begin{enumerate}
		\item $f(0,...,0)=0$ and $f(1,...,1)=1$;
		\item If ${\bf x}\le {\bf y}$, i.e., $x_i\le y_i$, for all $i=1,2,...,n$, then $f({\bf x})\le f({\bf y})$. 
	\end{enumerate}
\end{defi}

\begin{exem}\hypertarget{exem1}
	\hfill
	\begin{enumerate}[\normalfont (a)]
		\item Arithmetic Mean: $Arith({\bf x}) = \displaystyle\frac{1}{n}(x_{1} + x_{2} ... + x_{n})$
		
		\item Minimum: $Min({\bf x}) = min\{ x_{1}, x_{2},..., x_{n} \}$;
		
		\item Maximum: $Max({\bf x}) = max\{ x_{1}, x_{2},..., x_{n} \}$;
		
		\item Harmonic mean: $f_{n}({\bf x}) = \displaystyle\frac{n}{\frac{1}{x_{1}} + \frac{1}{x_{2}} + \cdots + \frac{1}{x_{n}} }$;
	\end{enumerate}
\end{exem}

From now on we will use the short term ``aggregation"  instead of ``$n$-ary aggregation function".

Aggregations can be divided into four distinct classes: \textit{Averaging, Conjunctive, Disjunctive} and \textit{Mixed}. Since this paper focus on averaging aggregations, we will define only this class. A wider approach in aggregation can be found in \cite{Beliakov,Dubois2,Grabisch,Baczynski}.

\begin{defi}\hypertarget{def2}
	An aggregation is called \textbf{\textit{Averaging}}, if for all ${\bf x}\in [0,1]^n$ we have:
	$$Min({\bf x}) \leq f({\bf x}) \leq Max({\bf x})$$
	
	\begin{exem}\hypertarget{exem2}
		The {\it Arithmetic Mean}, the {\it Maximum} and the {\it Minimum} are averaging aggregation functions.
	\end{exem}	
\end{defi}

\subsection{Special Types Aggregation Functions}

An aggregation function $f$:

\begin{enumerate}
	
	\item[(1)] is {\bf Idempotent} if, and only if, $f(x,..., x) = x$ for all $x \in [0,1]$.
	
	\item[(2)] is {\bf Homogeneous}  of order $k$ if, and only if, for all $\lambda \in [0,1]$ and ${\bf x} \in [0,1]^{n}$, $f(\lambda x_{1},\lambda x_{2},..., \lambda x_{n}) = \lambda^k f(x_{1}, x_{2},..., x_{n})$. When $f$ is homogeneous of order $1$ we simply say that $f$ is homogeneous.
	
	\item[(3)] is {\bf Shift-invariant} if, and only if, $f(x_{1} + r, x_{2} + r,.., x_{n} + r) = f(x_{1}, x_{2},.., x_{n}) + r$, for all $r\in [-1,1]$, ${\bf x} \in [0, 1]^{n}$ such that $(x_1+r,x_2+r,...,x_n+r)\in [0,1]^n$ and $f(x_1,x_2,...,x_n)+r\in [0,1]$.
	
	\item[(4)] is {\bf Monotonic} if, and only if, ${\bf x}\le {\bf y}$ implies $f({\bf x})\le f({\bf y})$.
	
	\item[(5)] is {\bf Strictly Monotone} if, and only if, $f({\bf x})<f({\bf y})$ whenever ${\bf x}<{\bf y}$, i.e. ${\bf x}\le {\bf y}$ and ${\bf x}\ne {\bf y}$.
	
	\item[(6)] has a {\bf Neutral Element} $e \in [0,1]$, if for all $t \in [0,1]$ at any coordinate input vector ${\bf x}$, it has to be:
	$$f(e,..., e, t, e,...,e) = t, \mbox{ and}$$
	
	\item[(7)] $f$ is {\bf Symmetric} if, and only if, its value is not changed under the permutations of the coordinates of ${\bf x}$, i.e, we have:
	$$f(x_{1}, x_{2},..., x_{n}) = f(x_{p_{(1)}}, x_{p_{(2)}}, \cdots, x_{p_{(n)}})$$
	For all $x$ and any permutation $P:\{1,2,...,n\}\rightarrow\{1,2,...,n\}$.
	
	\item[(8)] An {\bf Absorbing Element} \textbf{(\textit{Annihilator})} of an aggregation function $f$, is an element $a \in [0,1]$ such that:
	$$f(x_{1}, x_{2},..., x_{i-1}, a, x_{i+1}, ..., x_{n}) = a$$
	
	\item[(9)] A {\bf Zero Divisor} of an aggregation function is an element $a\in\ ]0,1[$, such that, there is some vector ${\bf x}$ with $x_j>0$, for all $1\le j\le n$, and $f(x_{1},..., x_{j-1}, a, x_{j+1}, .., x_{n}) = 0$. 
	
	\item[(10)] A {\bf One Divisor} of an aggregation function $f$ is an element $a \in\ ]0, 1[$ such that, there is some vector ${\bf x}$ with $x_j<1$, for all $1\le j\le n$, and $f(x_{1},..., x_{j-1}, a, x_{j+1}, .., x_{n}) = 1$. 
	
	\item[(11)] If $N: [0,1] \rightarrow [0,1]$ is a strong negation\footnote{A {\bf strong negation} is an antitonic function $N:[0,1]\rightarrow [0,1]$ such that $N(N(\alpha))=\alpha$ for all $\alpha\in[0,1]$.} and $f: [0,1]^{n} \rightarrow [0,1]$ is an aggregation function, then the {\bf dual aggregation function} of $f$ is:
	$$f^{d}(x_{1}, x_{2},..., x_{n}) = N(f(N(x_{1}),N(x_{2}),..., N(x_{n})))$$
	which is also an aggregation function.
\end{enumerate}

\begin{exem}
	\hfill
	\begin{enumerate}[\normalfont (i)]
		\item The functions: $Arith, Min$ and $Max$ are examples of idempotent, homogeneous, shift-invariant, monotonic and symmetric functions.
		\item $Min$ and $Max$ have $0$ and $1$ elements as annihilator, respectively, but $Arith$ does not have annihiladors.
		\item $Min$, $Max$ and $Arith$ do not have zero divisors and one divisors.
		\item The dual of $Max$ with respect to negation $N(x)=1-x$ is the $Min$ function.
	\end{enumerate}	
\end{exem}

\subsection{Ordered Weighted Averaging Function - OWA}\label{subs3}

In the field of aggregation functions there is a very important subclass in which the elements are parametric; they are called: \textit{\textbf{Ordered Weighted Averaging}} or simply \textbf{OWA} \cite{Yager1}.

An $OW\!A$ is an aggregation function which associates weights to all components $x_{i}$ of an input vector {\bf x}. To achieve that observe the following definition.

\begin{defi}
	Let be an input vector ${\bf x}=(x_1,x_2,\dots,x_n)\in [0,1]^n$ and a vector of weights ${\bf w}=(w_1,\dots, w_n)$, such that $\sum\limits_{i=1}^nw_i=1$. Assuming  the permutation: 
	$$Sort({\bf x})=(x_{p(1)},x_{p(2)},\dots, x_{p(n)})$$
	such that $x_{p(i)}\geq x_{p(i+1)}$, i.e., $x_{p(1)}\geq x_{p(2)}\geq\dots\geq x_{p(n)}$, the Ordered Weighted Averaging ({\it OWA}) Function with respect to ${\bf w}$, is the function $OWA_{\bf w}:[0,1]^n\rightarrow [0,1]$ such that:
	$$OW\!A_{\bf w}({\bf x})=\sum\limits_{i=1}^n w_i\cdot x_{p(i)}$$
\end{defi} 

In what follows we remove ${\bf w}$ from $OW\!A_{\bf w}({\bf x})$. The main properties of such functions are:

\begin{enumerate}
	\item[(a)] For any vector of weights ${\bf w}$, the function $OW\!A({\bf x})$ is idempotent and monotonic. Moreover, $OW\!A({\bf x})$ is strictly increasing if all weights ${\bf w}$ are positive;

	\item[(b)] The dual of a $OW\!A_{\bf w}$ is denoted by $(OW\!A)^d$, with the vector of weights dually ordered, i.e. $(OW\!A_{\bf w})^d=OW\!A_{{\bf w}^d}$, where  ${\bf w}^d = (w_{p(n)}, w_{p(n-1)},..., w_{p(1)})$.
	
	\item[(c)] $OW\!A$  are continuous, symmetric and shift-invariant functions;
	
	\item[(d)] They do not have neutral or absorption elements, except in the special case of functions $OW\!A$ of $Max$ and $Min$.
\end{enumerate}

\subsubsection{Examples of special functions $OW\!A$}

\begin{enumerate}
	\item[1.] If all weight vector components are equal to $\frac{1}{n}$, then $OW\!A({\bf x}) = Arith(({\bf x})$.  
	
	\item[2.] If ${\bf w} = (1, 0, 0,..., 0)$, then $OW\!A({\bf x}) = Max({\bf x})$.
	
	\item[3.] If ${\bf w} = (0, 0, 0, ..., 1)$, then $OW\!A({\bf x}) = Min({\bf x})$.
	
	\item[4.] if $w_{i} = 0$, for all $i$, with the exception of a \textit{k}-th member, i.e, $w_{k} = 1$, then this $OW\!A$ is called {\bf static} and $OW\!A_{\bf w}(x) = x_{k}$
	
	\item[5.] Given a vector ${\bf x}$ and its ordered permutation $Sort({\bf x}) = (x_{(1)},\ldots,x_{(n)})$, the {\it Median} function
	$$Med({\bf x})= \left\{\begin{array}{ll}
	\frac{1}{2}(x_{(k)}+x_{(k+1)}), & \mbox{ if } n=2k\\
	x_{(k+1)}, & \mbox{ if } n=2k+1
	\end{array}\right.$$
	is an $OW\!A$ function in which the vector of weights is defined by:
	 \begin{itemize}
	 	\item If $n$ is odd, then $w_i=0$ for all $i\ne\lceil\frac{n}{2}\rceil$ and $w_{\lceil n/2\rceil}=1$.
	 	\item If $n$ is even, then $w_i=0$ for all $i\ne\lceil\frac{n+1}{2}\rceil$ and $i\ne \lfloor\frac{n+1}{2}\rfloor$, and $w_{\lceil n/2\rceil}=w_{\lfloor n/2 \rfloor}=\frac{1}{2}$.
	 \end{itemize}
\end{enumerate}

\begin{exem}\hypertarget{exem4}
	The $n$-dimensional $cOW\!A$ function \cite{Yager3} is the $OW\!A$ operator, with weighted vector defined by:
	\begin{itemize}
		\item If $n$ is even, then $w_j=\frac{2(2j-1)}{n^2}$, for $1\le j\le \frac{n}{2}$, and $w_{n/2+i}=w_{n/2-i+1}$, for $1\le i\le \frac{n}{2}$. 
		\item If $n$ is odd, then $w_j=\frac{2(2j-1)}{n^2}$, for $1\le j\le \frac{n-1}{2}$, $w_{n/2+i}=w_{n/2-i+1}$, for $1\le i\le \frac{n}{2}$, and $w_{(n+1)/2}=1-2\sum\limits_{j=1}^{(n-1)/2} w_i$.
	\end{itemize}
\end{exem}

The $OW\!A$ functions are defined in terms of a predetermined vector of weights. In the next section we propose the generalization of the concept of $OW\!A$ in order to relax the vector of weights. To achieve that we replace the vector of weights by a family of functions. The resulting functions are called \textbf{Dynamic Ordered Weighted Avegaring Functions} or in short: \textbf{DYOWA}s.

\section{Dynamic Ordered Weighted Avegaring Functions - $DYOW\!A$}

Before defining the notion of $DYOW\!A$ functions, we need to establish the notion of {\it weight-function}.

\begin{defi}\hypertarget{def4}
	A finite family of functions  $\Gamma=\{f_i:[0,1]^n\rightarrow [0,1]\ |\ 1\le i\le n\}$ such that: $$\sum_{i=1}^n f_i({\bf x})=1.$$
	is called family of {\bf weight-function} (FWF).
	
	A {\bf Dynamic Ordered Weighted Averaging} Function or simply {\bf DYOWA} associated to a FWF $\Gamma$ is a function of the form:
	$$DYOW\!A_{\Gamma}({\bf x})=\sum_{i=1}^{n} f_i({\bf x})\cdot x_i$$
\end{defi}

Below we show some examples of $DYOW\!A$ operators with their respective weight-functions.

\begin{exem}\hypertarget{exem5}
	Let be $\Gamma=\{ f_i({\bf x})=\frac{1}{n}\ | \ 1\leq i\leq n\}$. The $DYOW\!A$ operator associated to $\Gamma$, $DYOW\!A_{\Gamma}({\bf x})$, is $Arith({\bf x})$.
\end{exem}

\begin{exem}\hypertarget{exem6}
	The function {\it Minimum} can be obtained from $\Gamma=\{ f_i\ |\ 1\le i\le n\}$, where $f_1({\bf x})=f_2({\bf x})=\cdots =f_{n-1}({\bf x})=0$ and $f_n({\bf x})=1$, for all ${\bf x}\in [0,1]^n$.
\end{exem}

\begin{exem}\hypertarget{exem7}
	Similarly, the function {\it Maximum} is also of type $DYOW\!A$ with $\Gamma$ dually defined.
\end{exem}

\begin{exem}\hypertarget{exem8}
	For any vector of weights ${\bf w}=(w_1,w_2,...,w_n)$, A function $OW\!A_{\bf w}({\bf x})$ is a $DYOW\!A$ in which the weight-functions are given by: $f_i({\bf x})= w_{p(i)}$, where $p:\{1,2,\cdots,n\}\longrightarrow \{1,2,\cdots,n\}$ is the permutation, such that $p(i)=j$ with $x_i=x_{(j)}$. For example: If ${\bf w}=(0.3,0.4,0.3)$, then for ${\bf x}=(0.1,1.0,0.9)$ we have $x_1=x_{(3)},\ x_2=x_{(1)}$ and $x_3=x_{(2)}$. Thus, $f_1({\bf x})=0.3,\ f_2({\bf x})=0.3$, $f_3({\bf x})=0.4$, and $DYOW\!A({\bf x})=0.3\cdot 0.1+ 0.3\cdot1.0 + 0.4\cdot 0.9=0.69$
\end{exem}

\begin{rmk}
	Example 8 shows that the functions $OW\!A$, introduced by Yager, are special cases of $DYOW\!A$ functions. There are, however, some $DYOW\!A$ functions which are not $OW\!A$.
\end{rmk}

\begin{exem}\hypertarget{exem9}
	Let $\Gamma=\{\sin(x)\cdot y, 1-\sin(x)\cdot y\}$. The respective $DYOW\!A$ function is $DYOW\!A(x,y)=(\sin(x)\cdot y)\cdot x+(1-\sin(x)\cdot y)\cdot y$, which is not an $OW\!A$ function.
\end{exem}

\subsection{Properties of $DYOW\!A$ Functions}

The next theorem characterizes the $DYOW\!A$ functions which are also aggregations.

\begin{thm}\hypertarget{teo1}
	Let $\Gamma=\{f_1,\cdots,f_n \}$ be a FWF. A $DYOW\!A_{\Gamma}$ is an aggregation function if, and only if, it is monotonic.
\end{thm}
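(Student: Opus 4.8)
The plan is to go back to the definition of an aggregation function (Definition 1) and isolate which of its requirements are automatic for a $DYOW\!A$. Recall that $f\colon[0,1]^n\to[0,1]$ is an aggregation exactly when (i) it takes values in $[0,1]$, (ii) it satisfies the boundary conditions $f(0,\dots,0)=0$ and $f(1,\dots,1)=1$, and (iii) it is monotonic. The forward implication is then immediate: if $DYOW\!A_{\Gamma}$ is an aggregation, then monotonicity is literally clause (2) of that definition, so there is nothing further to prove.

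For the converse I would assume that $DYOW\!A_{\Gamma}$ is monotonic and show that conditions (i) and (ii) hold for free, using only the defining property $\sum_{i=1}^n f_i({\bf x})=1$ of the FWF together with $f_i({\bf x}),x_i\in[0,1]$. For the codomain, each summand $f_i({\bf x})\,x_i$ is nonnegative, so $DYOW\!A_{\Gamma}({\bf x})\ge 0$; and since $x_i\le 1$ for every $i$,
$$DYOW\!A_{\Gamma}({\bf x})=\sum_{i=1}^n f_i({\bf x})\,x_i\le \sum_{i=1}^n f_i({\bf x})=1,$$
so the output always lies in $[0,1]$. The boundary values are just as quick: substituting ${\bf x}=(0,\dots,0)$ annihilates every term and gives $0$, while ${\bf x}=(1,\dots,1)$ yields $\sum_{i=1}^n f_i(1,\dots,1)=1$ by the FWF property.

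Combining the two directions finishes the argument: membership in $[0,1]$ and the two boundary conditions are structural consequences of working with an FWF, so the single discriminating property that decides whether a $DYOW\!A_{\Gamma}$ is an aggregation is exactly monotonicity. I do not expect a genuine obstacle here; the only point that deserves care is verifying (i), namely that $DYOW\!A_{\Gamma}$ really maps into $[0,1]$ rather than into $\mathbb{R}$, since this is what licenses treating it as a candidate aggregation in the first place. Everything else is a direct reading of the definitions of aggregation and of the FWF.
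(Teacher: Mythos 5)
Your proposal is correct and follows essentially the same route as the paper's proof: the forward direction is immediate from the definition of aggregation, and the converse reduces to checking the boundary conditions, which follow from the FWF property $\sum_{i=1}^n f_i({\bf x})=1$. Your explicit verification that $DYOW\!A_{\Gamma}$ maps into $[0,1]$ is a small but worthwhile addition that the paper leaves implicit.
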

\begin{proof}
	Obviously, if $DYOW\!A_{\Gamma}$ is an aggregation, then it is monotonic function. Conversely, if $DYOW\!A_{\Gamma}$ is monotonic, then for it to become an aggregation, enough to show that
	$$DYOW\!A_{\Gamma}(0,...,0)=0 \mbox{ e } DYOW\!A_{\Gamma}(1,...,1)=1,$$
	this follows from the definition of $DYOW\!A$.
\end{proof}

\begin{cor}\hypertarget{cor1}
	A $DYOW\!A$ is an aggregation function if, and only if, it is an a aggregation of type averaging.
\end{cor}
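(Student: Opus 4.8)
The plan is to prove the two implications separately, with the backward direction immediate and the forward direction resting on a single convex-combination observation.

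First, for the direction \emph{averaging aggregation $\Rightarrow$ aggregation}, I would simply invoke the definition of averaging aggregation (Definition 2): such a function is by definition already an aggregation, so nothing further is required here.

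For the direction \emph{aggregation $\Rightarrow$ averaging aggregation}, I would assume $DYOW\!A_{\Gamma}$ is an aggregation and show that it satisfies the averaging bounds $Min({\bf x})\le DYOW\!A_{\Gamma}({\bf x})\le Max({\bf x})$. The key point is that each weight-function $f_i$ has codomain $[0,1]$, hence $f_i({\bf x})\ge 0$, while the FWF condition gives $\sum_{i=1}^n f_i({\bf x})=1$. Thus, for every fixed ${\bf x}$, the numbers $f_1({\bf x}),\dots,f_n({\bf x})$ form a vector of convex coefficients, and $DYOW\!A_{\Gamma}({\bf x})=\sum_{i=1}^n f_i({\bf x})\cdot x_i$ is a convex combination of the coordinates of ${\bf x}$. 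From here I would bound this combination by replacing each $x_i$ with $Max({\bf x})$ above and $Min({\bf x})$ below, and then factor out using the sum-to-one condition:
$$Min({\bf x})=Min({\bf x})\sum_{i=1}^n f_i({\bf x})\le \sum_{i=1}^n f_i({\bf x})\cdot x_i\le Max({\bf x})\sum_{i=1}^n f_i({\bf x})=Max({\bf x}).$$
This establishes the averaging property, and together with the standing assumption that $DYOW\!A_{\Gamma}$ is an aggregation, it yields that $DYOW\!A_{\Gamma}$ is an averaging aggregation.

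The honest remark is that there is no genuine obstacle here: the displayed bounds in fact hold for \emph{every} $DYOW\!A_{\Gamma}$, whether or not it is an aggregation, so the real content of the corollary is merely that a $DYOW\!A_{\Gamma}$ which happens to be an aggregation automatically lands in the averaging subclass. The single point requiring a moment's care is that the argument uses the non-negativity of the $f_i$ (coming from their codomain being $[0,1]$), not just the sum-to-one condition; without non-negativity the combination would fail to be convex and the min/max bounds could break. One could alternatively route the forward direction through Theorem 1, replacing ``aggregation'' by ``monotonic'', but this detour is unnecessary since the convex-combination bounds are available directly.
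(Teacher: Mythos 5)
Your proof is correct and takes essentially the same route as the paper: the paper's own argument is exactly the convex-combination bound $Min({\bf x})\sum_i f_i({\bf x})\le\sum_i f_i({\bf x})\,x_i\le Max({\bf x})\sum_i f_i({\bf x})$ followed by the sum-to-one condition. Your added observations --- that the bounds hold for every $DYOW\!A_{\Gamma}$ regardless of whether it is an aggregation, and that non-negativity of the $f_i$ is silently used --- are accurate but do not change the argument.
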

\begin{proof}
	For all ${\bf x}=(x_1,...,x_n)$ have to
	$$Min({\bf x})\le x_i\le Max({\bf x}),\ \forall i=1,2,...,n.$$
	So,
	$$\sum\limits_{i=1}^nf_i({\bf x})\cdot Min({\bf x})\le \sum\limits_{i=1}^nf_i({\bf x})\cdot x_i\le \sum\limits_{i=1}^nf_i({\bf x})\cdot Max({\bf x}),$$
	but as $\sum\limits_{i=1}^nf_i({\bf x})=1$, it follows that
	$$Min({\bf x})\le \sum\limits_{i=1}^nf_i({\bf x})\cdot x_i \le Max({\bf x})$$
\end{proof}

\begin{cor}\hypertarget{cor2}
	All functions of the type $DYOW\!A$ presented in examples \hyperlink{exem4}{4}, \hyperlink{exem5}{5}, \hyperlink{exem6}{6}, \hyperlink{exem7}{7} and \hyperlink{exem8}{8} are averaging aggregation functions.
\end{cor}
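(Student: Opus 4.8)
The plan is to reduce everything to a single property---monotonicity---by invoking the two results just established. By \hyperlink{cor1}{Corollary~1}, a $DYOW\!A$ is an averaging aggregation precisely when it is an aggregation, and by \hyperlink{teo1}{Theorem~1} a $DYOW\!A$ is an aggregation precisely when it is monotonic. Since each of Examples \hyperlink{exem4}{4}, \hyperlink{exem5}{5}, \hyperlink{exem6}{6}, \hyperlink{exem7}{7} and \hyperlink{exem8}{8} has already been exhibited as a $DYOW\!A$ (this is exactly the content of the respective example), it therefore suffices to verify that each of these five functions is monotonic; the averaging conclusion then follows automatically.

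First I would dispatch Examples \hyperlink{exem5}{5}, \hyperlink{exem6}{6} and \hyperlink{exem7}{7}. These are $Arith$, $Min$ and $Max$, which were already identified in Example \hyperlink{exem2}{2} as averaging aggregation functions; in any case their monotonicity is immediate, since ${\bf x}\le {\bf y}$ gives $x_i\le y_i$ for every $i$, and each of the three operations preserves this inequality.

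Next I would handle Examples \hyperlink{exem8}{8} and \hyperlink{exem4}{4}. For the general $OW\!A_{\bf w}$ of Example \hyperlink{exem8}{8}, monotonicity is exactly property (a) of the $OW\!A$ functions, which asserts that for \emph{any} weight vector ${\bf w}$ the operator $OW\!A_{\bf w}$ is idempotent and monotonic. The $cOW\!A$ of Example \hyperlink{exem4}{4} is merely a particular choice of weight vector, so it is covered by the same property. Combining these observations, every function in the list is a monotonic $DYOW\!A$, hence an aggregation by \hyperlink{teo1}{Theorem~1}, and hence an averaging aggregation by \hyperlink{cor1}{Corollary~1}.

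I do not expect a genuine obstacle here: all the substantive work has already been carried out in \hyperlink{teo1}{Theorem~1}, \hyperlink{cor1}{Corollary~1}, and the stated properties of $OW\!A$. The only point requiring care is bookkeeping---confirming that each listed example is correctly recognized as a $DYOW\!A$ and that the appropriate monotonicity fact is cited---rather than any new computation.
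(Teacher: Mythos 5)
Your proposal is correct and follows exactly the paper's route: the paper's own proof is the one-liner ``Just see that those functions are monotonic,'' relying implicitly on Theorem~1 and Corollary~1 just as you do. You simply spell out the monotonicity checks (via Example~2 for $Arith$, $Min$, $Max$ and via property~(a) of $OW\!A$ for Examples~4 and~8) that the paper leaves to the reader.
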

\begin{proof}
	Just see that those functions are monotonic.
\end{proof}

\begin{prop}\hypertarget{prop1}
	For every $\Gamma$, $DYOW\!A_{\Gamma}$ is idempotent.
\end{prop}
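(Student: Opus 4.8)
The plan is to evaluate $DYOW\!A_{\Gamma}$ directly on a constant (diagonal) input vector and then exploit the single defining property of a family of weight-functions. First I would fix an arbitrary $x\in[0,1]$ and consider the vector ${\bf x}=(x,x,\dots,x)$. Substituting into the definition of the operator gives
$$DYOW\!A_{\Gamma}(x,\dots,x)=\sum_{i=1}^n f_i(x,\dots,x)\cdot x.$$
Since the scalar $x$ is common to every term and does not depend on the summation index $i$, the next step is simply to factor it out of the sum:
$$DYOW\!A_{\Gamma}(x,\dots,x)=x\cdot\sum_{i=1}^n f_i(x,\dots,x).$$

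The only remaining ingredient is the normalization condition built into the definition of a FWF, namely $\sum_{i=1}^n f_i({\bf y})=1$ for every ${\bf y}\in[0,1]^n$. Applying this identity to the particular input ${\bf y}=(x,\dots,x)$ collapses the sum to $1$, so that $DYOW\!A_{\Gamma}(x,\dots,x)=x$. As $x\in[0,1]$ was arbitrary, this establishes idempotence for the given $\Gamma$.

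I do not expect any genuine obstacle in this argument; the whole content is the observation that the weight-functions sum to $1$ \emph{pointwise}, in particular at the diagonal inputs. This is exactly why the statement can be asserted for \emph{every} $\Gamma$ with no further hypotheses: unlike the preceding corollaries, idempotence here does not require $DYOW\!A_{\Gamma}$ to be monotonic or to be an aggregation at all, since neither the boundary conditions nor monotonicity are invoked anywhere in the computation.
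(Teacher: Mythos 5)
Your proof is correct and follows exactly the same route as the paper's: evaluate on the constant vector, factor out $x$, and apply the FWF normalization $\sum_{i=1}^n f_i({\bf x})=1$. No issues.
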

\begin{proof}
	If ${\bf x}=(x,...,x)$ with $t\in [0,1]$, then:
	$$DYOW\!A_{\Gamma}({\bf x})=\sum\limits_{i=1}^nf_i({\bf x})\cdot x=x\cdot\sum\limits_{i=1}^nf_i({\bf x})=x$$
\end{proof}

This property is important because it tells us that every $DYOW\!A$ is idempotent, regardless it is an aggregation or not.

\begin{prop}\hypertarget{prop2}
	If $\Gamma$ is invariant under translations, i.e, $f_i(x_1+\lambda,x_2+\lambda,...,x_n+\lambda)=f_i(x_1,x_2,...,x_n)$ for any ${\bf x}\in [0,1]^n$,  for $i\in\{1,2,\cdots,n\}$ and $\lambda\in[-1,1]$, then $DYOW\!A_{\Gamma}$ is shift-invariant.
\end{prop}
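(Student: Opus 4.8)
The plan is to verify the defining identity of shift-invariance by a direct computation, exploiting the two structural facts available to us: the translation invariance of each weight-function $f_i$ assumed in the hypothesis, and the normalization $\sum_{i=1}^n f_i({\bf x})=1$ that comes from $\Gamma$ being an FWF. Concretely, I would fix ${\bf x}\in[0,1]^n$ and $\lambda\in[-1,1]$ subject to the admissibility conditions appearing in the definition of shift-invariance, namely $(x_1+\lambda,\dots,x_n+\lambda)\in[0,1]^n$ and $DYOW\!A_{\Gamma}({\bf x})+\lambda\in[0,1]$, and then show $DYOW\!A_{\Gamma}(x_1+\lambda,\dots,x_n+\lambda)=DYOW\!A_{\Gamma}({\bf x})+\lambda$.

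The central step is to expand $DYOW\!A_{\Gamma}(x_1+\lambda,\dots,x_n+\lambda)$ using the definition, obtaining $\sum_{i=1}^n f_i(x_1+\lambda,\dots,x_n+\lambda)\cdot(x_i+\lambda)$. First, I would apply the translation-invariance hypothesis to each weight, which replaces $f_i(x_1+\lambda,\dots,x_n+\lambda)$ by $f_i({\bf x})$, so the sum becomes $\sum_{i=1}^n f_i({\bf x})\cdot(x_i+\lambda)$. Next I would split this by linearity into $\sum_{i=1}^n f_i({\bf x})\cdot x_i+\lambda\sum_{i=1}^n f_i({\bf x})$. The first term is exactly $DYOW\!A_{\Gamma}({\bf x})$ by definition, and the second term equals $\lambda$ because $\sum_{i=1}^n f_i({\bf x})=1$. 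Adding these gives $DYOW\!A_{\Gamma}({\bf x})+\lambda$, which is precisely the shift-invariance identity.

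I do not expect any genuine obstacle here; the argument is a one-line linearity computation, and the only points deserving care are bookkeeping ones. Specifically, the translation-invariance hypothesis must be applied to the weights \emph{before} factoring out $\lambda$, since it is exactly that hypothesis which keeps the weights unchanged under the shift. One should also check that the range of $\lambda$ and the domain constraints imposed in the hypothesis coincide with those in the definition of shift-invariance; since both are stated for $\lambda\in[-1,1]$ with the shifted arguments required to lie in $[0,1]^n$, no extra case analysis is needed.
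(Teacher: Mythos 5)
Your proposal is correct and follows exactly the same route as the paper's proof: expand $DYOW\!A_{\Gamma}(x_1+\lambda,\dots,x_n+\lambda)$, use the translation invariance of the $f_i$ to replace the shifted weights, split the sum by linearity, and invoke $\sum_{i=1}^n f_i({\bf x})=1$ to recover the term $\lambda$. Your additional remarks on the admissibility constraints are sound bookkeeping and do not change the argument.
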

\begin{proof}
	Let ${\bf x}=(x_1,...,x_n)\in[0,1]^n$ and $\lambda\in[-1,1]$ such that $(x_1+\lambda,x_2+\lambda,...,x_n+\lambda)\in[0,1]^n$. then,
	\begin{eqnarray*}
		& DYOW\!A_{\Gamma} & \!\!\!\!\!(x_1+\lambda,...,x_n+\lambda) =\\
		& = & \sum\limits_{i=1}^n f_i(x_1+\lambda,...,x_n+\lambda)\cdot(x_i+\lambda)\\
		& = & \sum\limits_{i=1}^nf_i(x_1+\lambda,...,x_n+\lambda)\cdot x_i\\
		& & +\ \sum\limits_{i=1}^nf_i(x_1+\lambda,...,x_n+\lambda)\cdot\lambda\\
		& = & \sum\limits_{i=1}^nf_i(x_1,...,x_n)\cdot x_i+\lambda\\
		& = & DYOW\!A_{\Gamma}(x_1,...,x_n)+\lambda
	\end{eqnarray*}
\end{proof}

\begin{prop}\hypertarget{prop3}
	If $\Gamma$ is homogeneous of order $k$ (i.e., if $f_i$ is homogeneus of order $k$, for each $f_i\in \Gamma$), then $DYOW\!A_{\Gamma}({\bf x})$ is homogeneous of order $k+1$.
\end{prop}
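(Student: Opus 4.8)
The plan is to argue directly from the definition of $DYOW\!A_{\Gamma}$, exactly as in the preceding propositions: substitute the scaled input $\lambda{\bf x}=(\lambda x_1,\dots,\lambda x_n)$ and then use the homogeneity hypothesis on each weight-function to extract the appropriate power of $\lambda$. Since each $f_i$ is assumed homogeneous of order $k$, I would replace $f_i(\lambda x_1,\dots,\lambda x_n)$ by $\lambda^k f_i(x_1,\dots,x_n)$, and observe that the factor $x_i$ in the summand contributes one further power of $\lambda$.

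Concretely, first I would write out
$$DYOW\!A_{\Gamma}(\lambda x_1,\dots,\lambda x_n)=\sum_{i=1}^n f_i(\lambda x_1,\dots,\lambda x_n)\cdot(\lambda x_i).$$
Next I would invoke the homogeneity of order $k$ of each $f_i\in\Gamma$ to rewrite this as
$$\sum_{i=1}^n \lambda^k f_i(x_1,\dots,x_n)\cdot\lambda x_i
=\lambda^{k+1}\sum_{i=1}^n f_i(x_1,\dots,x_n)\cdot x_i
=\lambda^{k+1}\,DYOW\!A_{\Gamma}(x_1,\dots,x_n),$$
which is precisely the statement that $DYOW\!A_{\Gamma}$ is homogeneous of order $k+1$. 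The only bookkeeping point is that the two sources of scaling must be kept distinct: the $\lambda^k$ comes from the weight-functions $f_i$, while the extra $\lambda$ comes from the coordinates $x_i$ appearing explicitly in the $DYOW\!A$ sum.

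I do not anticipate any genuine obstacle here: the argument is a one-line computation once the homogeneity hypothesis is applied termwise, and it parallels the structure of the shift-invariance proof in Proposition~2. The only thing worth double-checking is the domain condition, namely that $\lambda\in[0,1]$ guarantees $\lambda{\bf x}\in[0,1]^n$ so that every quantity above is well defined; this is immediate and requires no further argument.
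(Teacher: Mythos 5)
Your proof is correct and follows essentially the same route as the paper's: substitute $\lambda{\bf x}$ into the defining sum, apply the order-$k$ homogeneity of each $f_i$ termwise, and factor out $\lambda^{k+1}$. The paper merely treats $\lambda=0$ as a separate (trivial) case before doing the identical computation for $\lambda\neq 0$.
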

\begin{proof}
	Of course that, if $\lambda=0$, then $DYOW\!A_{\Gamma}(\lambda x_1,...,\lambda x_n)=\lambda f(x_1,...,x_n$). Now, to $\lambda \ne 0$ we have:
	\begin{eqnarray*}
		DYOW\!A_{\Gamma}(\lambda x_1,...,\lambda x_n) & = & \sum\limits_{i=1}^nf_i(\lambda x_1,...,\lambda x_n)\cdot\lambda x_i\\
		& = & \lambda\cdot \sum\limits_{i=1}^n\lambda^kf_i(x_1,...,x_n)x_i\\
		& = & \lambda^{k+1} \cdot DYOW\!A_{\Gamma}(x_1,...,x_n)
	\end{eqnarray*}
\end{proof}

\begin{exem}\hypertarget{exem10}
	Let $\Gamma$ be defined by
	$$f_i(x_1,...,x_n)=\left\{\begin{array}{ll}
	\frac{1}{n}, & \mbox{ if } x_1=...=x_n=0\\
	\frac{x_i}{\sum\limits_{j=1}^n x_j}, & \mbox{otherwise}
	\end{array}\right.$$
	Then, $$DYOW\!A_{\Gamma}({\bf x})=\left\{\begin{array}{ll}
	0, & \mbox{ if } x_1,...,x_n=0\\
	\frac{\sum\limits_{i=1}^n x_i^2}{\sum\limits_{i=1}^n x_i}, & \mbox{otherwise} \end{array}\right.$$
	This $DYOW\!A_{\Gamma}$ is idempotent, homogeneous and shift-invariant. However, $DYOW\!A_{\Gamma}$ is not monotonic, since $DYOW\!A_{\Gamma}(0.5,0.2,0.1)=0.375$ and $DYOW\!A_{\Gamma}(0.5,0.22,0.2)=0.368$.
\end{exem}

The next definition provides a special FWF, which will be used to build a $DYOW\!A $ whose properties are very important for this paper.

\begin{defi}\hypertarget{def5}
	Consider the family $\Gamma$ of functions $$f_i({\bf x})=\left\{\begin{array}{ll}
	\frac{1}{n}, &\!\!\!\!\! \mbox{if } {\bf x}=(x,...,x)\\ \frac{1}{n-1}\left(1-\frac{|x_i-Med({\bf x})|}{\sum\limits_{j=1}^n|x_j-Med({\bf x})|}\right), & \!\!\!\!\mbox{otherwise}\end{array}\right.$$
		Then, $\Gamma$ is a FWF, i.e. $\sum\limits_{i=1}^n f_i({\bf x})=1$, for all ${\bf x}\in [0,1]^n$. Let $\mathbf{H}$ be the associated $DYOW\!A$. The computation of $\mathbf{H}$ can be performed using the following expressions:
		{ \small \begin{eqnarray*}
				\mathbf{H} ({\bf x}) & = & \left\{ \begin{array}{ll}
					x, &\!\!\! \mbox{ if } {\bf x}=(x,...,x)\\
					\frac{1}{n-1}\sum\limits_{i=1}^n\left(x_i-\frac{x_i|x_i-Med({\bf x})|}{\sum\limits_{j=1}^n|x_j-Med({\bf x})|} \right), & \!\!\! \mbox{otherwise}\end{array}\right.
			\end{eqnarray*}}
\end{defi}
	
\begin{exem}\hypertarget{exem11}
	Let be $n=3$. So, for ${\bf x}=(0.1,0.3,0)$ we have
	$$f_1({\bf x})=0.5,\ f_2({\bf x})=0.167,\ f_3({\bf x})=0.333 \mbox{ and } \mathbf{H}({\bf x})=0.08.$$
\end{exem}

\begin{prop}\hypertarget{prop4}
	The weight-functions defined in \hyperlink{def5}{Definition 5} are such that: $f_i(x_1+\lambda,...,x_n+\lambda)=f_i(x_1,x_2,...,x_n)$ and $f_i(\lambda x_1,...,\lambda x_n)=f_i(x_1,...,x_n)$, for any $1\le i\le n$.
\end{prop}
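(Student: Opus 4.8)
The plan is to reduce both invariance claims to the corresponding equivariance properties of the $Med$ operator, since outside the diagonal case $\mathbf{x}=(x,\dots,x)$ the weight-function $f_i$ of \hyperlink{def5}{Definition 5} depends on $\mathbf{x}$ only through the absolute deviations $|x_j-Med(\mathbf{x})|$ and their sum $\sum_{j=1}^n|x_j-Med(\mathbf{x})|$. First I would record two facts about the median. Since $Med$ is an $OW\!A$ function, it is shift-invariant, so for any $\lambda$ with $(x_1+\lambda,\dots,x_n+\lambda)\in[0,1]^n$ we have $Med(x_1+\lambda,\dots,x_n+\lambda)=Med(\mathbf{x})+\lambda$; and since multiplication by a nonnegative scalar does not change the ordering of the coordinates, the order statistics satisfy $(\lambda\mathbf{x})_{(j)}=\lambda\, x_{(j)}$ for $\lambda\ge 0$, whence $Med(\lambda x_1,\dots,\lambda x_n)=\lambda\, Med(\mathbf{x})$.

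For the translation identity I would split on whether $\mathbf{x}$ is constant. If $\mathbf{x}=(x,\dots,x)$, then $(x+\lambda,\dots,x+\lambda)$ is again constant and both sides equal $\frac{1}{n}$. Otherwise the translated vector is still non-constant, and using the shift-equivariance of the median I would compute
$$|x_i+\lambda-Med(x_1+\lambda,\dots,x_n+\lambda)|=|x_i+\lambda-(Med(\mathbf{x})+\lambda)|=|x_i-Med(\mathbf{x})|,$$
so every absolute deviation is preserved. Consequently the numerator and the denominator $\sum_{j=1}^n|x_j-Med(\mathbf{x})|$ in the "otherwise" branch are both unchanged, and therefore $f_i(x_1+\lambda,\dots,x_n+\lambda)=f_i(\mathbf{x})$.

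For the scaling identity I would argue analogously, taking $\lambda>0$ so that $(\lambda x_1,\dots,\lambda x_n)$ remains non-constant whenever $\mathbf{x}$ is. Using homogeneity of the median,
$$|\lambda x_i-Med(\lambda x_1,\dots,\lambda x_n)|=|\lambda x_i-\lambda\, Med(\mathbf{x})|=\lambda\,|x_i-Med(\mathbf{x})|,$$
so the common factor $\lambda$ appears in each term of the numerator and of the sum $\sum_{j=1}^n|\lambda x_j-Med(\lambda\mathbf{x})|=\lambda\sum_{j=1}^n|x_j-Med(\mathbf{x})|$, and hence cancels in the ratio, yielding $f_i(\lambda x_1,\dots,\lambda x_n)=f_i(\mathbf{x})$. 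The constant case is immediate, exactly as in the translation argument.

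The only genuine obstacle is the interplay between the two-branch definition of $f_i$ and the degenerate values of $\lambda$, so I would verify that the transformation always keeps us in the same branch: translation by any admissible $\lambda$ and scaling by any $\lambda>0$ both map constant vectors to constant vectors and non-constant vectors to non-constant vectors, so the case split is respected and the cancellations above are legitimate. The single exception is $\lambda=0$ in the scaling claim, where $(\lambda x_1,\dots,\lambda x_n)=(0,\dots,0)$ is constant and $f_i(0,\dots,0)=\frac{1}{n}$ even when $\mathbf{x}$ is not; thus the scaling statement must be read for $\lambda>0$, i.e. the weight-functions are homogeneous of order $0$ away from the origin, which is precisely the form needed to combine with the homogeneity computation of \hyperlink{prop3}{Proposition 3}.
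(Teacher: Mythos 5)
Your proof is correct and follows essentially the same route as the paper's: both reduce the claim to the shift-equivariance and (positive) homogeneity of $Med$, check that the two branches of the definition are preserved, and cancel $\lambda$ in the numerator and denominator of the non-constant branch. Your additional observation that the scaling identity must be read for $\lambda>0$ (since $\lambda=0$ sends a non-constant $\mathbf{x}$ to $(0,\dots,0)$, which falls into the constant branch with weights $\frac{1}{n}$) is a valid sharpening that the paper's proof passes over silently; it causes no harm downstream because the proof of Proposition 3 handles $\lambda=0$ separately.
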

\begin{proof}
	Writing ${\bf x'}=(x_1+\lambda,...,x_n+\lambda)$, then $f(x_1+\lambda,...,x_n+\lambda) =(f_1({\bf x'}),...,f_n({\bf x'}))$. Clearly, $Med({\bf x'})=Med({\bf x})+\lambda$. Thus, for ${\bf x}\ne(x,...,x)$ we have:
	\begin{eqnarray*}
		f_i({\bf x'}) & = & \textstyle\frac{1}{n-1}\left(1-\frac{|x_i+\lambda-Med({\bf x'})|}{\sum\limits_{j=1}^n|x_j+\lambda-Med({\bf x'})|}\right)\\
		& = & \textstyle\frac{1}{n-1}\left(1-\frac{|x_i+\lambda-(Med({\bf x})+\lambda)|}{\sum\limits_{j=1}^n|x_j+\lambda-(Med({\bf x})+\lambda)|}\right)\\
		& = & \textstyle\frac{1}{n-1}\left(1-\frac{|x_i-Med({\bf x})|}{\sum\limits_{j=1}^n|x_j-Med({\bf x})|}\right)\\
		& = & f_i({\bf x}).
	\end{eqnarray*}
	Therefore, $f({\bf x'})=(f_1({\bf x'}),...,f_n({\bf x'}))=(f_1({\bf x}),...,f_n({\bf x}))$. The case in which ${\bf x} =(x,...,x)$ is immediate.
	
	To check the second property, make ${\bf x''}=(\lambda x_1,...,\lambda x_n)$, note that $Med({\bf x''})=\lambda med ({\bf x})$ and for ${\bf x} \ne (x,...,x)$	
	\begin{eqnarray*}
		f_i({\bf x''}) & = & \textstyle\frac{1}{n-1}\left(1-\frac{|\lambda x_i-Med({\bf \lambda x})|}{\sum\limits_{j=1}^n|\lambda x_j-Med({\bf \lambda x})|}\right)\\
		& = & \textstyle\frac{1}{n-1}\left(1-\frac{|\lambda x_i-\lambda Med({\bf x})|}{\sum\limits_{j=1}^n|\lambda x_j-\lambda Med({\bf x})|}\right)\\
		& = & \textstyle\frac{1}{n-1}\left(1-\frac{|\lambda|\cdot|x_i-Med({\bf x})|}{|\lambda|\cdot\sum\limits_{j=1}^n|x_j-Med({\bf x})|}\right)\\
		& = & \textstyle\frac{1}{n-1}\left(1-\frac{|x_i-Med({\bf x})|}{\sum\limits_{j=1}^n|x_j-Med({\bf x})|}\right)\\
		& = & f_i({\bf x})
	\end{eqnarray*}
	Therefore, $f({\bf x''})=(f_1({\bf x''}),...,f_n({\bf x''}))=(f_1({\bf x}),...,f_n({\bf x}))=f({\bf x})$. The case in which ${\bf x} =(x,...,x)$ is also immediately
\end{proof}

\begin{cor}\hypertarget{cor3}
	$\mathbf{H}$ is shift-invariant and homogeneous.  
\end{cor}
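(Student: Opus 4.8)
The plan is to derive both properties directly from the structural results already established --- namely Propositions 2, 3 and 4 --- rather than manipulating the explicit piecewise formula for $\mathbf{H}$. The key observation is that Proposition 4 already tells us precisely how the weight-functions $f_i$ of \hyperlink{def5}{Definition 5} behave under the two operations in question: they are invariant under translations and invariant under scaling. Since $\mathbf{H}=DYOW\!A_{\Gamma}$ for this particular $\Gamma$, each assertion of the corollary will reduce to matching the hypotheses of a previously proved general proposition.

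First I would establish shift-invariance. Proposition 4 states that $f_i(x_1+\lambda,\dots,x_n+\lambda)=f_i(x_1,\dots,x_n)$ for every $1\le i\le n$, which is exactly the hypothesis ``$\Gamma$ is invariant under translations'' required by Proposition 2. Applying Proposition 2 to this $\Gamma$ therefore yields immediately that $\mathbf{H}$ is shift-invariant, and the domain restrictions in the definition of shift-invariance are precisely the ones already handled inside the proof of Proposition 2.

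Next I would establish homogeneity. The second identity of Proposition 4, $f_i(\lambda x_1,\dots,\lambda x_n)=f_i(x_1,\dots,x_n)$, says that each $f_i$ satisfies $f_i(\lambda{\bf x})=\lambda^0 f_i({\bf x})$; that is, each $f_i$ is homogeneous of order $0$. Hence $\Gamma$ is homogeneous of order $k=0$ in the sense of Proposition 3. Applying Proposition 3 with $k=0$ then gives that $\mathbf{H}=DYOW\!A_{\Gamma}$ is homogeneous of order $k+1=1$, which is exactly the claim that $\mathbf{H}$ is homogeneous.

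Because all the genuine computation has been front-loaded into Proposition 4 and the general Propositions 2 and 3, there is no substantial obstacle in this corollary. The only point requiring care is to \emph{recognize} the scaling identity of Proposition 4 as homogeneity of order $0$, so that Proposition 3 legitimately applies with $k=0$ and delivers order $1$ for $\mathbf{H}$; once this identification is made explicit, both parts follow as one-line invocations of the earlier results.
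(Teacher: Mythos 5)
Your proof is correct and follows exactly the paper's route: the paper likewise deduces the corollary directly from Propositions 2 and 3, with Proposition 4 supplying the required invariance of the weight-functions. Your explicit identification of the scaling identity as homogeneity of order $0$ (so that Proposition 3 yields order $1$) is the only detail the paper leaves implicit.
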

\begin{proof}
 Straightforward for propositions \hyperlink{prop2}{2} and \hyperlink{prop3}{3}.
\end{proof}

The function $\mathbf {H}$ is of great importance to this work, since this function, as well as some $DYOW\!A$'s already mentioned will provide us tools able: (1) To reduce the size of images  and (2) To deal with noise reduction.

Now, we present some other properties of function $\mathbf{H}$.

\subsection{Properties of $\mathbf{H}$}

In addition to idempotency, homogeneity and shift-invariance $\mathbf{H}$ has the following proprerties.

\begin{prop}\hypertarget{prop5}
	$\mathbf{H}$ has no neutral element.
\end{prop}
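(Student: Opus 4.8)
The plan is to show that no element $e \in [0,1]$ can serve as a neutral element for $\mathbf{H}$, by testing the defining condition
$$\mathbf{H}(e,\dots,e,t,e,\dots,e) = t$$
against a suitable choice of $t$ and recalling the idempotence of $\mathbf{H}$ established in \hyperlink{prop1}{Proposition 1}. The key observation is that a neutral element, if it existed, would have to satisfy this identity for \emph{every} $t \in [0,1]$ and at \emph{every} coordinate position. Since $\mathbf{H}$ is symmetric in the spirit of its construction (the weight-functions depend only on the distances $|x_j - Med(\mathbf{x})|$), it suffices to place $t$ in a single coordinate, say the first, and evaluate $\mathbf{H}(t,e,\dots,e)$.

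First I would argue by contradiction: suppose $e$ is a neutral element. Taking $t = e$ forces the input to be the constant vector $(e,\dots,e)$, so by idempotence $\mathbf{H}(e,\dots,e) = e$, which is consistent and gives no information — so the real content comes from choosing $t \neq e$. For such a choice the input $(t,e,\dots,e)$ is not constant, so $\mathbf{H}$ is computed by the second branch of \hyperlink{def5}{Definition 5}. For $n \geq 3$ and $t \neq e$, the median of $(t,e,\dots,e)$ equals $e$ (the repeated value dominates), so $Med(\mathbf{x}) = e$, and I would compute the weight-functions explicitly: the coordinate carrying $t$ has $|t - e| \neq 0$ while the $n-1$ coordinates carrying $e$ have distance $0$. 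Plugging into the formula yields $f_1(\mathbf{x}) = \frac{1}{n-1}(1 - 1) = 0$ for the $t$-coordinate and $f_i(\mathbf{x}) = \frac{1}{n-1}$ for each $e$-coordinate, whence
$$\mathbf{H}(t,e,\dots,e) = 0 \cdot t + \sum_{i=2}^n \tfrac{1}{n-1}\, e = e.$$
This already contradicts $\mathbf{H}(t,e,\dots,e) = t$ as soon as we pick any $t \neq e$, so no $e$ can be neutral.

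The only delicate point — and the main obstacle — is the behaviour of the median and the denominator $\sum_j |x_j - Med(\mathbf{x})|$ in the boundary or low-dimensional cases. For $n = 2$ the median is $\frac{1}{2}(t+e)$, so the distances are both $\frac{1}{2}|t-e|$, giving each weight $\frac{1}{n-1}(1 - \frac{1}{2}) = \frac{1}{2}$ and hence $\mathbf{H}(t,e) = \frac{1}{2}(t+e)$, which again equals $t$ only when $t = e$; so the conclusion persists but must be verified separately. I would therefore handle $n = 2$ as a short separate computation and treat $n \geq 3$ by the argument above, noting in each case that the denominator $\sum_j |x_j - Med(\mathbf{x})|$ is strictly positive precisely because the input is non-constant, so no division-by-zero arises. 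Assembling these cases, for every candidate $e$ there exists $t \neq e$ with $\mathbf{H}(t,e,\dots,e) \neq t$, so $\mathbf{H}$ has no neutral element.
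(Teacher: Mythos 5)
Your proposal is correct and follows essentially the same route as the paper: split into the cases $n\geq 3$ (where $Med(t,e,\dots,e)=e$, the $t$-coordinate gets weight $0$ and each $e$-coordinate gets weight $\frac{1}{n-1}$, so $\mathbf{H}=e\neq t$) and $n=2$ (where both weights are $\frac{1}{2}$ and $\mathbf{H}(t,e)=\frac{t+e}{2}\neq t$). The computations match the paper's proof of \hyperlink{prop5}{Proposition 5} exactly, and your explicit remark that the denominator is nonzero for non-constant inputs is a small but welcome addition.
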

\begin{proof}
	Suppose $\mathbf{H}$ has a neutral element $e$, find the vector of weight for ${\bf x}=(e,...,e,x,e,...,e)$. Note that if $n\geq 3$, then $Med({\bf x})=e$ and therefore,
	\begin{eqnarray*}
		f_i({\bf x}) & = & \textstyle\frac{1}{n-1}\left( 1-\frac{|x_i-Med({\bf x})|}{\sum\limits_{j=1}^n|x_j-Med({\bf x})|}\right)\\
		& = & \textstyle\frac{1}{n-1}\left(1-\frac{|x_i-e|}{\sum\limits_{j=1}^n|x_j-e|}\right)\\
		& = & \textstyle\frac{1}{n-1}\left(1-\frac{|x_i-e|}{|x-e|}\right)
	\end{eqnarray*}
	therefore,
	$$f_i({\bf x})=	\displaystyle\left\{\begin{array}{ll}
	\frac{1}{n-1}, \mbox{ if } x_i=e\\
	0, \mbox{ if } x_i=x
	\end{array}\right., \mbox{ to } n\geq 3$$
	i.e.,
	$$f({\bf x})=\textstyle\left(\frac{1}{n-1},...,\frac{1}{n-1},0,\frac{1}{n-1},...,\frac{1}{n-1} \right)$$
	and
	$$\mathbf{H}({\bf x})=(n-1)\cdot\frac{e}{n-1}=e$$
	But since $e$ is a neutral element of $\mathbf{H}$, $\mathbf{H}({\bf x})=x$. Absurd, since we can always take  $x\ne e$.
	
	For $n=2$, we have $Med({\bf x})=\displaystyle\frac{x+e}{2}$, where ${\bf x}=(x,e)$ or ${\bf x}=(e,x)$. In both cases it is not difficult to show that $f({\bf x})=(0.5,0.5)$ and $\mathbf{H}({\bf x})=\displaystyle\frac{x+e}{2}$. Thus, taking $x\ne e$, again we have $\mathbf{H}(x,e)\ne x$.
\end{proof}

\begin{prop}\hypertarget{prop6}
	$\mathbf{H}$ has no absorbing elements.
\end{prop}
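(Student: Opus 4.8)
The plan is to argue by contradiction, mirroring the computation used in the proof of Proposition 5. Suppose $\mathbf{H}$ admits an absorbing element $a\in[0,1]$. By definition this forces $\mathbf{H}(a,y,\dots,y)=a$ for every choice of $y\in[0,1]$, so it suffices to exhibit a single $y$ for which the identity fails. Since $[0,1]$ contains more than one point, I would fix any $y\neq a$ and evaluate $\mathbf{H}$ on ${\bf x}=(a,y,\dots,y)$, with $a$ sitting in the first coordinate and $y$ repeated $n-1$ times.

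First I would compute the weight vector attached to this ${\bf x}$ via Definition 5. The key observation is that the repeated value $y$ is the median: for $n\ge 3$ we have $Med({\bf x})=y$, because among the $n$ entries exactly $n-1$ equal $y$, so both middle order statistics coincide with $y$, regardless of the parity of $n$. Substituting then gives $|x_1-Med({\bf x})|=|a-y|$ and $|x_j-Med({\bf x})|=0$ for $j\ge 2$, so that $\sum_{j=1}^n|x_j-Med({\bf x})|=|a-y|\neq 0$. Hence $f_1({\bf x})=\frac{1}{n-1}(1-\frac{|a-y|}{|a-y|})=0$ and $f_j({\bf x})=\frac{1}{n-1}$ for $j\ge 2$. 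Feeding these into $\mathbf{H}$ yields $\mathbf{H}(a,y,\dots,y)=0\cdot a+(n-1)\cdot\frac{1}{n-1}\cdot y=y\neq a$, contradicting the assumption that $a$ is absorbing.

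The case $n=2$ must be handled separately, since there the repeated value is no longer the median. For ${\bf x}=(a,y)$ one has $Med({\bf x})=\frac{a+y}{2}$, which gives $|x_1-Med({\bf x})|=|x_2-Med({\bf x})|=\frac{|a-y|}{2}$ and therefore $f_1({\bf x})=f_2({\bf x})=\frac12$. Consequently $\mathbf{H}(a,y)=\frac{a+y}{2}\neq a$ for $y\neq a$, again a contradiction, and the proof is complete.

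I do not anticipate a genuine obstacle: the argument is essentially the mirror image of Proposition 5 (there the singleton coordinate received zero weight, whereas here it is the coordinate holding the candidate annihilator that does). The only points requiring care are the explicit median evaluation—verifying that $n-1$ equal entries force the median to equal $y$ in both the odd and even subcases—and the separate treatment of the boundary case $n=2$, where the two-element median distributes the weight evenly rather than annihilating a coordinate.
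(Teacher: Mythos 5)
Your proof is correct and follows essentially the same route as the paper: evaluate $\mathbf{H}$ on a vector of the form $(a,c,\dots,c)$, observe that for $n\ge 3$ the median equals $c$ so the coordinate holding $a$ receives weight $0$ while the rest each receive $\frac{1}{n-1}$, and treat $n=2$ separately via $\mathbf{H}(x_1,x_2)=\frac{x_1+x_2}{2}$. The only (cosmetic) difference is that you use a single generic test vector $(a,y,\dots,y)$ with $y\ne a$ to get $\mathbf{H}=y\ne a$ directly, whereas the paper uses the two specific choices $c=0$ and $c=1$ to force $a=0$ and $a=1$ simultaneously; both yield the same contradiction.
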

\begin{proof}
	To $n=2$, we have $\mathbf{H}({\bf x})=\displaystyle\frac{x_1+x_2}{2}$, which has no absorbing elements. Now for $n\geq 3$ we have to ${\bf x}=(a,0,...,0)$ with $Med({\bf x})=0$ therefore,
	$$f_1({\bf x})=\frac{1}{n-1}\left(1-\frac{a}{a} \right)=0 \mbox{ and } f_i=\frac{1}{n-1}, \forall i=2,...,n.$$
	therefore,
	$$\mathbf{H}(a,0,...,0)=0\cdot a+\frac{1}{n-1}\cdot 0+...+\frac{1}{n-1}\cdot 0=a\Rightarrow a=0,$$
	but to ${\bf x}=(a,1,...,1)$ we have to $Med({\bf x})=1$. Furthermore,
	$$f_1({\bf x})=\frac{1}{n-1}\left(1-\frac{1-a}1-{a} \right)=0 $$
	and
	$$f_i=\frac{1}{n-1} \mbox{ para } i=2,3,...,n.$$
	therefore,
	$$\mathbf{H}(a,1,...,1)=0\cdot a+\frac{1}{n-1}\cdot 1+...+\frac{1}{n-1}\cdot 1=a\Rightarrow a=1.$$
	With this we prove that $\mathbf{H}$ does note have annihiladors.
\end{proof}

\begin{prop}\hypertarget{prop7}
	$\mathbf{H}$ has no zero divisors.
\end{prop}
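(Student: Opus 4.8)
The plan is to reduce the statement to the averaging lower bound $\mathrm{Min}(\mathbf{x}) \le \mathbf{H}(\mathbf{x})$ and then use the fact that, by the definition of a zero divisor, the existence of a zero divisor $a \in\ ]0,1[$ would force $\mathbf{H}$ to vanish on a vector whose coordinates are all strictly positive.

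First I would establish that, for every $\mathbf{x} \in [0,1]^n$, each weight-function $f_i(\mathbf{x})$ from \hyperlink{def5}{Definition 5} is non-negative. On the constant diagonal this is immediate, since there $f_i(\mathbf{x}) = \frac{1}{n}$. Off the diagonal the denominator $\sum_{j=1}^n |x_j - Med(\mathbf{x})|$ is strictly positive, and since $|x_i - Med(\mathbf{x})|$ is a single summand of that very sum we have $\frac{|x_i - Med(\mathbf{x})|}{\sum_{j=1}^n |x_j - Med(\mathbf{x})|} \le 1$, whence $f_i(\mathbf{x}) = \frac{1}{n-1}\bigl(1 - \frac{|x_i - Med(\mathbf{x})|}{\sum_{j=1}^n |x_j - Med(\mathbf{x})|}\bigr) \ge 0$.

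Since the $f_i(\mathbf{x})$ are non-negative and sum to $1$ (they form an FWF), the value $\mathbf{H}(\mathbf{x}) = \sum_{i=1}^n f_i(\mathbf{x})\, x_i$ is a genuine convex combination of the coordinates $x_1,\dots,x_n$; exactly as in the proof of \hyperlink{cor1}{Corollary 1}, this yields $\mathrm{Min}(\mathbf{x}) \le \mathbf{H}(\mathbf{x}) \le \mathrm{Max}(\mathbf{x})$. Now suppose, for contradiction, that some $a \in\ ]0,1[$ is a zero divisor. Then there is a vector all of whose coordinates are strictly positive (including the coordinate equal to $a$, since $a > 0$) on which $\mathbf{H}$ equals $0$. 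But all coordinates strictly positive gives $\mathrm{Min}(\mathbf{x}) > 0$, so $\mathbf{H}(\mathbf{x}) \ge \mathrm{Min}(\mathbf{x}) > 0$, contradicting $\mathbf{H}(\mathbf{x}) = 0$. Hence $\mathbf{H}$ admits no zero divisor.

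The one point requiring care — and the step I would single out as the crux — is the non-negativity of the weights, i.e. the fact that $\mathbf{H}(\mathbf{x})$ is genuinely a convex combination; once that is in hand, the averaging bound and the contradiction are immediate, and the degenerate constant case is harmless, since there $\mathbf{H}(\mathbf{x}) = x > 0$ directly. A more computational route, mirroring the proofs of \hyperlink{prop5}{Propositions 5} and \hyperlink{prop6}{6}, would instead try to solve $\mathbf{H}(\mathbf{x}) = 0$ coordinate by coordinate, but this seems needlessly laborious compared with the convexity argument above.
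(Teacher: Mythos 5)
Your proof is correct, and it takes a mildly different route from the paper's. The paper gives precisely the coordinate-by-coordinate argument you mention at the end and set aside: from $\mathbf{H}({\bf x})=\sum_{i=1}^n f_i({\bf x})\cdot x_i=0$ with every $x_i>0$ it concludes that each term, hence each weight $f_i({\bf x})$, must vanish, contradicting $\sum_{i=1}^n f_i({\bf x})=1$; this is only a few lines, not laborious. You instead pass through the averaging bound $Min({\bf x})\le \mathbf{H}({\bf x})$ from the proof of Corollary \hyperlink{cor1}{1} and observe that a zero divisor would force $\mathbf{H}$ to vanish on a vector with strictly positive minimum. The two arguments rest on exactly the same facts --- the weights are non-negative and sum to one --- so neither is more general, but your write-up has one genuine advantage: you actually verify from the formula in Definition \hyperlink{def5}{5} that $f_i({\bf x})\ge 0$ (the ratio $|x_i-Med({\bf x})|\,/\sum_{j=1}^n|x_j-Med({\bf x})|$ is at most $1$ because the numerator is a single summand of the denominator, which is strictly positive off the diagonal), whereas the paper merely asserts $f_i({\bf x})\in[0,1]$. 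That non-negativity is the one point at which either proof could conceivably fail, so making it explicit is worth the extra sentence.
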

\begin{proof}
	Let $a\in\ ]0,1[$ and consider ${\bf x}=(a,x_2,...,x_n)\in\ ]0,1]^n$. In order to have $\mathbf{H}({\bf x})=\sum\limits_{i=1}^n f_i({\bf x})\cdot x_i=0$ we have $f_i({\bf x})\cdot x_i=0$ for all $i=1,2,...,n$. But as $a\ne 0$ and we can always take $x_2,x_3,...,x_n$ also different from zero, then for each $i=1,2,...,n$ there remains only the possibility of terms:
	$$f_i{(\bf x)}=0 \mbox{ para } i=1,2,...,n.$$
	This is absurd, for $f_i({\bf x})\in [0,1]$ e $\sum\limits_{i=1}^nf_i({\bf x})=1$. like this, $\mathbf{H}$ has no zero divisors.
\end{proof}

\begin{prop}\hypertarget{prop8}
	$\mathbf{H}$ does not have one divisors
\end{prop}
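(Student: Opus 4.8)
The plan is to mirror the argument used for zero divisors in Proposition 7, exploiting the fact that the one-divisor condition is the order-dual of the zero-divisor condition. Suppose, for contradiction, that some $a\in\ ]0,1[$ is a one divisor of $\mathbf{H}$. Then there is a witnessing vector, which I may write as ${\bf x}=(a,x_2,\dots,x_n)$ with every coordinate strictly below $1$ (both $a<1$ and, by the definition of one divisor, each $x_j<1$), such that $\mathbf{H}({\bf x})=\sum_{i=1}^n f_i({\bf x})\cdot x_i=1$.

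First I would use the defining property $\sum_{i=1}^n f_i({\bf x})=1$ of an FWF to rewrite the defect from $1$ as a single sum:
$$1-\mathbf{H}({\bf x})=\sum_{i=1}^n f_i({\bf x})-\sum_{i=1}^n f_i({\bf x})\cdot x_i=\sum_{i=1}^n f_i({\bf x})\cdot(1-x_i).$$
Since each weight-function takes values in $[0,1]$, every factor $f_i({\bf x})\ge 0$, and since every coordinate satisfies $x_i<1$, every factor $1-x_i>0$. Hence the right-hand side is a sum of nonnegative terms, and $\mathbf{H}({\bf x})=1$ forces it to vanish; therefore each term $f_i({\bf x})\cdot(1-x_i)=0$. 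As $1-x_i>0$ for all $i$, this leaves only the possibility $f_i({\bf x})=0$ for every $i=1,\dots,n$, which contradicts $\sum_{i=1}^n f_i({\bf x})=1$. This contradiction shows that no such $a$ can exist.

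Equivalently --- and this is the conceptual reason the statement holds for any $DYOW\!A$, not just for $\mathbf{H}$ --- one may observe that, because the weights are nonnegative and sum to one, $\mathbf{H}({\bf x})$ is a convex combination of the coordinates, so $\mathbf{H}({\bf x})\le Max({\bf x})$ by the averaging bound of Corollary 1; when all coordinates lie strictly below $1$ this gives $Max({\bf x})<1$ and hence $\mathbf{H}({\bf x})<1$. I expect no genuine obstacle here: the only point requiring care is to read the one-divisor definition correctly, namely that replacing the $j$-th coordinate by $a\in\ ]0,1[$ leaves \emph{every} coordinate of the evaluated vector strictly less than $1$, which is exactly what makes each $1-x_i$ strictly positive. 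The degenerate case ${\bf x}=(x,\dots,x)$ needs no separate treatment, since there $\mathbf{H}({\bf x})=x<1$ as well.
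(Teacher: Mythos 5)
Your proof is correct, and it is both more complete and more general than the one printed in the paper. The paper's own proof consists of a single line: it evaluates $\mathbf{H}$ at the one vector $(a,0,\dots,0)$ and observes $\mathbf{H}(a,0,\dots,0)=f_1({\bf x})\cdot a\le a<1$. Strictly read, that only rules out one candidate witnessing vector, whereas the definition of a one divisor requires showing that \emph{no} vector with all coordinates below $1$ is mapped to $1$ --- which is exactly what your argument does. Your main argument transposes the paper's own zero-divisor proof (Proposition 7) through the identity $1-\mathbf{H}({\bf x})=\sum_{i=1}^n f_i({\bf x})\cdot(1-x_i)$ and reaches the contradiction $f_i({\bf x})=0$ for all $i$ against $\sum_{i=1}^n f_i({\bf x})=1$; this is precisely the order-dual of Proposition 7 and costs nothing extra. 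Your second observation --- that any $DYOW\!A$ is a convex combination of its inputs, so $\mathbf{H}({\bf x})\le Max({\bf x})<1$ whenever every coordinate is below $1$ --- is the cleanest route of all, and it makes explicit that the conclusion holds for every $DYOW\!A$, not just for $\mathbf{H}$, since it uses only nonnegativity of the weight-functions and the normalization to sum $1$. Either version is a valid replacement for, and an improvement on, the printed proof.
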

\begin{proof}
	Just to see that $a\in\ ]0,1[$, we have to $\mathbf{H}(a,0,...,0)=f_1({\bf x}).a\le a<1$.
\end{proof}

\begin{prop}\hypertarget{prop9}
	$\mathbf{H}$ is symmetric.
\end{prop}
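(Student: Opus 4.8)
The plan is to exploit the fact that each weight-function $f_i$ depends on the index $i$ only through the value $x_i$, while every other ingredient in its definition---the median $Med({\bf x})$ and the normalizing sum $\sum_{j=1}^n|x_j-Med({\bf x})|$---is itself a symmetric function of ${\bf x}$. Consequently, under a permutation of the coordinates each weight ``travels'' together with its value, so the defining sum $\sum_i f_i({\bf x})\,x_i$ is left unchanged. The constant case ${\bf x}=(x,\dots,x)$ is immediate, since permuting a constant vector returns the same vector and each $f_i\equiv 1/n$; so I would assume ${\bf x}\ne(x,\dots,x)$ from here on.

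Concretely, I would fix a permutation $P:\{1,\dots,n\}\to\{1,\dots,n\}$ and set ${\bf y}=(x_{P(1)},\dots,x_{P(n)})$, so that $y_i=x_{P(i)}$. First I would record two symmetry facts. Since the median depends only on the multiset of entries, $Med({\bf y})=Med({\bf x})$. Since $P$ is a bijection, reindexing the sum gives $\sum_{j=1}^n|y_j-Med({\bf y})|=\sum_{j=1}^n|x_{P(j)}-Med({\bf x})|=\sum_{j=1}^n|x_j-Med({\bf x})|$. Substituting these two identities into the definition of the weight-functions yields the key equivariance relation
$$f_i({\bf y})=\frac{1}{n-1}\left(1-\frac{|x_{P(i)}-Med({\bf x})|}{\sum_{j=1}^n|x_j-Med({\bf x})|}\right)=f_{P(i)}({\bf x}).$$

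Finally I would compute $\mathbf{H}({\bf y})=\sum_{i=1}^n f_i({\bf y})\,y_i=\sum_{i=1}^n f_{P(i)}({\bf x})\,x_{P(i)}$ and then re-index by $k=P(i)$; as $i$ runs over $\{1,\dots,n\}$ so does $k$, whence $\mathbf{H}({\bf y})=\sum_{k=1}^n f_k({\bf x})\,x_k=\mathbf{H}({\bf x})$, which is precisely symmetry. The only point requiring care---the ``hard part,'' such as it is---is verifying the equivariance identity $f_i({\bf y})=f_{P(i)}({\bf x})$, as it rests entirely on the symmetry of both the median and the normalizing denominator; once that identity is in hand, symmetry of $\mathbf{H}$ reduces to a relabeling of a finite sum.
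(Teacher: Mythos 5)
Your proof is correct and follows essentially the same route as the paper: both arguments rest on the permutation-invariance of $Med({\bf x})$ and of the normalizing sum $\sum_{j=1}^n|x_j-Med({\bf x})|$, followed by a re-indexing of the finite sum defining $\mathbf{H}$. Your isolation of the equivariance identity $f_i({\bf y})=f_{P(i)}({\bf x})$ is a slightly cleaner packaging of the same computation the paper carries out directly on the expanded expression for $\mathbf{H}$.
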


\begin{proof}
	Let $P:\{1,2,...,n\}\rightarrow\{1,2,...,n\}$ be a permutation. So we can easily see that $Med(x_{P(1)},x_{P(2)},...,x_{P(n)})=Med(x_1,x_2,...,x_n)$ for all ${\bf x}=(x_1,x_2,...,x_n)\in[0,1]^n$. We also have to $\sum\limits_{i=1}^n |x_{P(i)}-Med(x_{P(1)},x_{P(2)},...,x_{P(n)})|=\sum\limits_{i=1}^n|x_i-Med({\bf x})|$. Thus, it suffices to consider the case where $(x_{P(1)},x_{P(2)},...,x_{P(n)})\ne (x,x,...,x)$. But $(x_{P(1)},x_{P(2)},...,x_{P(n)})\ne (x,x,...,x)$ we have to:
	\begin{eqnarray*}
		\!\!\! & \mathbf{H} & \!\!\!\!\!(x_{P(1)},x_{P(2)},...,x_{P(n)}) =\\
		\!\!\!\!\!\!\! & = &\!\!\!	\textstyle\frac{1}{n-1}\sum\limits_{i=1}^n\left(x_{P(i)}-\frac{x_{P(i)}|x_{P(i)}-Med(x_{P(1)},...,x_{P(n)})|}{\sum\limits_{j=1}^n|x_{P(i)}-Med(x_{P(1)},...,x_{P(n)})|} \right)\\
		\!\!\!\!\! & = &\!\!\! \textstyle\frac{\sum\limits_{i=1}^n x_{P(i)}}{n-1}-\frac{1}{n-1}\cdot\sum\limits_{i=1}^n \frac{x_{P(i)}|x_{P(i)}-Med(x_1,...,x_n)|}{\sum\limits_{j=1}^n|x_{P(i)}-Med(x_1,...,x_n)|}\\
		& = &\!\!\! \textstyle\frac{\sum\limits_{i=1}^n x_i}{n-1}-\frac{1}{n-1}\cdot\sum\limits_{i=1}^n \frac{x_{P(i)}|x_{P(i)}-Med(x_1,...,x_n)|}{\sum\limits_{j=1}^n|x_i-Med(x_1,...,x_n)|}\\
		& = &\!\!\! \textstyle\frac{\sum\limits_{i=1}^n x_i}{n-1}-\frac{1}{n-1}\cdot\sum\limits_{i=1}^n \frac{x_i|x_i-Med(x_1,...,x_n)|}{\sum\limits_{j=1}^n|x_i-Med(x_1,...,x_n)|}\\
		& = &\!\!\! \mathbf{H}(x_1,...,x_n).
	\end{eqnarray*}
\end{proof}

Therefore, $\mathbf{H}$ satisfies the following properties:
\begin{itemize}
	\item Idempotence
	\item Homogeneity
	\item Shift-invariance
	\item Symmetry.
	\item $\mathbf{H}$ has no neutral element
	\item $\mathbf{H}$ has no absorbing elements
	\item $\mathbf{H}$ has no zero divisors
	\item $\mathbf{H}$ does not have one divisors
\end{itemize}

\begin{rmk}
	Unfortunately we do not prove here the monotonicity of $\mathbf{H}$, due to its complexity, but we suspect that it is true. This demonstration will be relegated to a future work.	
\end{rmk}

The next two sections show the suitability of $DYOW\!A$. They will provide applications for image and noise reduction.

\section{$DYOW\!A$'s as images reduction tools}

In this part of our work we use the functions $DYOW\!A$ studied in Examples \hyperlink{exem4}{4}, \hyperlink{exem5}{5}, \hyperlink{exem6}{6}, \hyperlink{exem7}{7} and \hyperlink{exem8}{8}, and definition \hyperlink{def5}{5} to build image reduction operators, the resulting images will be compared with the reduced image obtained from the operator function $\mathbf{H}$.

An image is a matrix $m\times n$, $M=A(i,j)$, where each $A(i,j)$ represents a pixel. In essence, a reduction operator reduces a given image $m\times n$ to another $m'\times n'$, such that $m'<m$ and $n'<n$. For example,
$$\left[\begin{tabular}{cccc}
0.1 & 0.2 & 0 & 0.5\\
0.3 & 0.3 & 0.2 & 0.8\\
1 & 0.5 & 0.6 & 0.4\\
0 & 0.3 & 0.5 & 0.7\\
\end{tabular}\right]\longmapsto \left[\begin{tabular}{cc}
0.1 & 0\\
1 & 0.6\\
\end{tabular}\right]$$

In Grayscale images the value of pixels belong to the set $[0,255]$, which can be normalized by dividing them by $255$, so that we can think of pixels as values in the range $[0,1]$.

\begin{figure}[h]
	\centering
	\includegraphics[width=7cm]{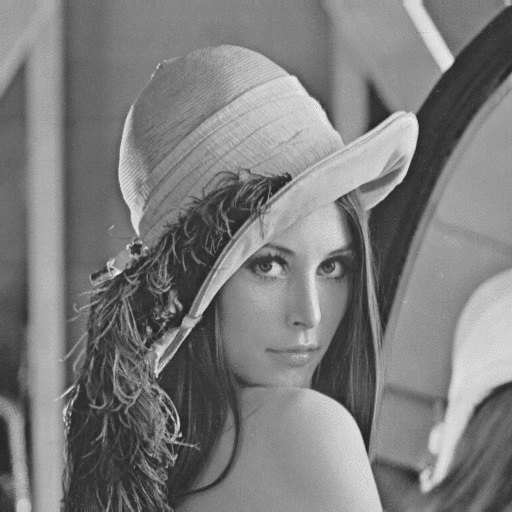}
	\caption{Example of image in Grayscale.}
\end{figure}

There are several possible ways to reduce a given image, as shown in the following example:

\begin{exem}\hypertarget{exem12}
	The image $$M= \left[\begin{tabular}{cccccc}
	0.8 & 0.7 & 0.2 & 1 & 0.5 & 0.5\\
	0.6 & 0.2 & 0.3 & 0.1 & 1 & 0\\
	0 & 0 & 0.6 & 0.4 & 0.9 & 1\\
	0.1 & 0.2 & 0.3 & 0.4 & 0.5 & 0.6\\
	\end{tabular}\right],$$
	can be reduced to another $2\times 3$ by partitioning $M$ in blocks $2\times 2$:
	$$\overline{M}= 
	\left[\begin{tabular}{ccc}
		$\left[\begin{tabular}{cc} 
			0.8 & 0.7\\ 0.6 & 0.2
		\end{tabular}\right]$ & 
		$\left[\begin{tabular}{cc} 
			0.2 & 1\\ 0.3 & 0.1
		\end{tabular}\right]$ & 
		$\left[\begin{tabular}{cc} 
			0.5 & 0.5\\  1 & 0 
		\end{tabular}\right]$\\
		& & \\
		$\left[\begin{tabular}{cc} 
			0 & 0\\ 0.1 & 0.2 
		\end{tabular}\right]$ & 
		$\left[\begin{tabular}{cc} 
			0.6 & 0.4\\ 0.3 & 0.4 
		\end{tabular}\right]$ &
		$\left[\begin{tabular}{cc} 
			0.9 & 1\\ 0.5 & 0.6 
		\end{tabular}\right]$
	\end{tabular}\right],$$
	 and applying to each block, for example, the function $f(x,y,z,w)=Max(x,y,z,w)$:
	 We obtain, the image:
	$$M_*=\left[\begin{tabular}{ccc}
	0.8 & 1 &\ 1\ \\
	0.2 & 0.6 &\ 1\ \\
	\end{tabular}\right]$$
	Applying $g(x,y,z,w)=Min(x,y,z,w)$ we  would obtain:
	$$M_{**}=\left[\begin{tabular}{ccc}
	0.2 & 0.1 & 0\\
	0 & 0.2 & 0.5\\
	\end{tabular}\right]$$
\end{exem}

In fact, if we apply any other function, we get a new image (usually different from the previous one), but what is the best?

One possible answer to this question involves a method called {\bf magnification} or {\bf extension} (see \cite{Jurio,Yang1,Yang2}), which is a method which magnifies the reduced image to another with the same size of the original one. The magnified image is then compared with the original input image.

\begin{exem}\hypertarget{exe13}
	From $M_*$ and $M_{**}$, we get images $4\times 6 $, $M'$ and $M''$, simply cloning each pixel ,
	$$\left[\begin{tabular}{c}
	x
	\end{tabular}\right]
	\longmapsto
	\left[\begin{tabular}{cc}
	x & x\\
	x & x\\
	\end{tabular}\right]$$
	We obtain, new images
	$$M_1=\left[\begin{tabular}{cccccc}
	0.8 & 0.8 & 1 & 1 &\ 1\ &\ 1\ \\
	0.8 & 0.8 & 1 & 1 &\ 1\ &\ 1\ \\
	0.2 & 0.2 & 0.6 & 0.6 &\ 1\ &\ 1\ \\
	0.2 & 0.2 & 0.6 & 0.6 &\ 1\ &\ 1\ \\
	\end{tabular}\right]$$
	and
	$$M_2= \left[\begin{tabular}{cccccc}
	0.2 & 0.2 & 0.1 & 0.1 &\ 0\ &\ 0\ \\
	0.2 & 0.2 & 0.1 & 0.1 &\ 0\ &\ 0\ \\
	\ 0\ &\ 0\ & 0.2 & 0.2 & 0.5 & 0.5\\
	\ 0\ &\ 0\ & 0.2 & 0.2 & 0.5 & 0.5\\
	\end{tabular}\right]
	$$
\end{exem}

Since $M_1$ e $M_2$ have the same size as the original image $M$, we can now measure what is the best reduction. This can be done by comparing the initial image $M$ with each of the resulting images, $M_1$ and $M_2$. But, how do  we compare?

One of the possibilities to compare the images $M_1$ and $M_2$ with the original image $M$ is to use the mensure PSNR \cite{Gonzales}, calculated as follows:
$$PSNR(I,K)=10\cdot log_{10}\left(\frac{MAX_I^2}{MSE(I,K)}\right),$$
where $I=I(i,j)$ and $K=K(i,j)$ are two images, $MSE(I,K)=\frac{1}{nm}\sum\limits_{i=1}^m\sum\limits_{j=1}^n[I(i,j)-K(i,j)]^n$ and $MAX_I$ is the maximum possible pixel value of pixel. Observe that the closer the image the smaller the value of MSE and the larger the value of PSNR \footnote{In particular, if the input image are equal, then the MSE value is zero and the PSNR will be infinity.}.

In what follows, we use $DYOW\!A$ operators: $\mathbf{H}, cOW\!A, Median$ and $Arith$ to reduce size of images in grayscale. We apply the following method:

\vspace{1em}
\hrule
\begin{center}
	{\bf Method 1}
\end{center}
\hrule
\begin{enumerate}
	\item Reduce the input images using the $\mathbf{H}$, $cOW\!A$, {\it Arithmetic Mean} and {\it Median};
	\item Magnify the reduced image to the size of the original image using the method described in \hyperlink{exe13}{example 13};
	\item Compare the last image with the original one using the measure $PSNR$.
\end{enumerate}
\hrule
\vspace{1em}
	
\begin{rmk}
	This general method can be applied to any kind of image. In this work  we applied it to the 10 images in grayscale of size $512\times 512$ (Figure \ref*{FigOrigImg}) \footnote{In this paper we made two reductions: using $2\times 2$ blocks and $4\times 4$ blocks.}.
\end{rmk}

\begin{figure}[!h]
	\centering
	\includegraphics[width=3cm]{fig1.jpg}
	\includegraphics[width=3cm]{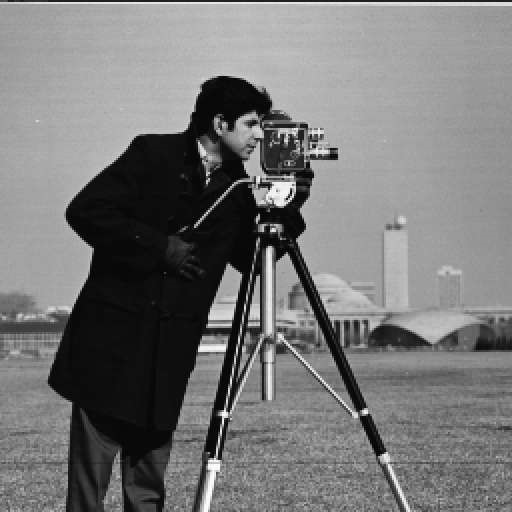}\\
    \vspace{0.05cm}
	\includegraphics[width=3cm]{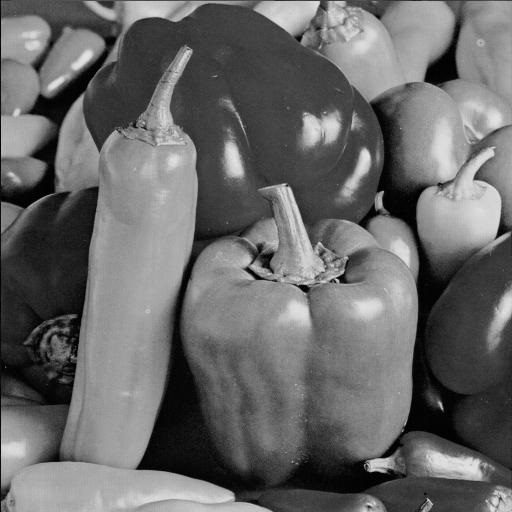}
	\includegraphics[width=3cm]{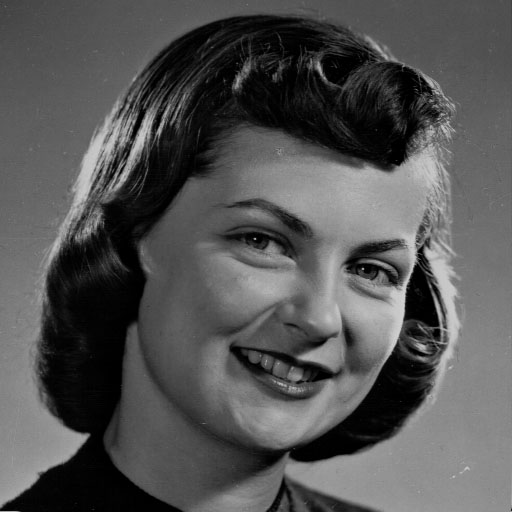}\\	
	\vspace{0.05cm}
	\includegraphics[width=3cm]{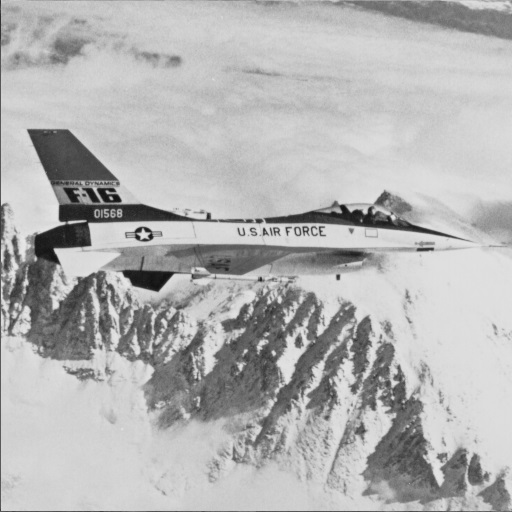}
	\includegraphics[width=3cm]{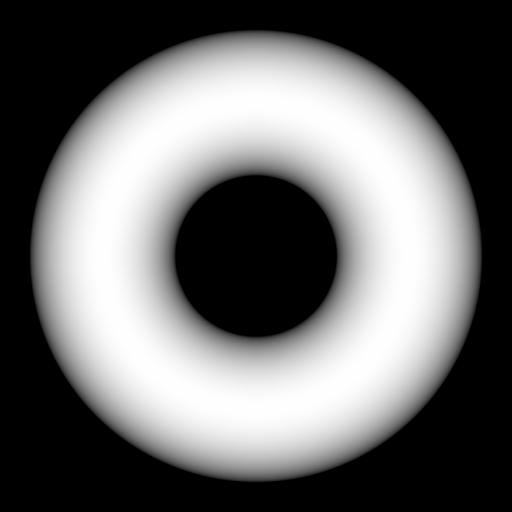}\\
	\vspace{0.05cm}
	\includegraphics[width=3cm]{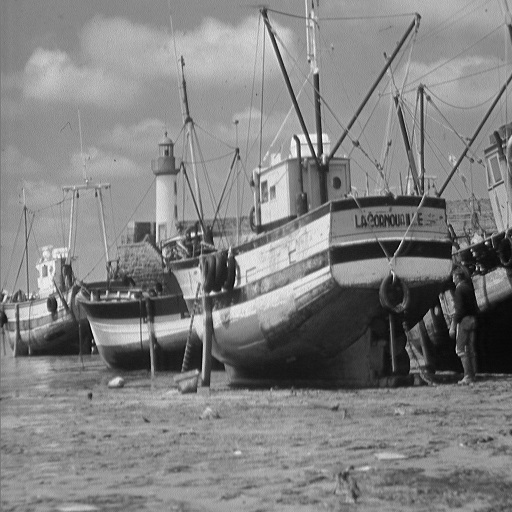}
	\includegraphics[width=3cm]{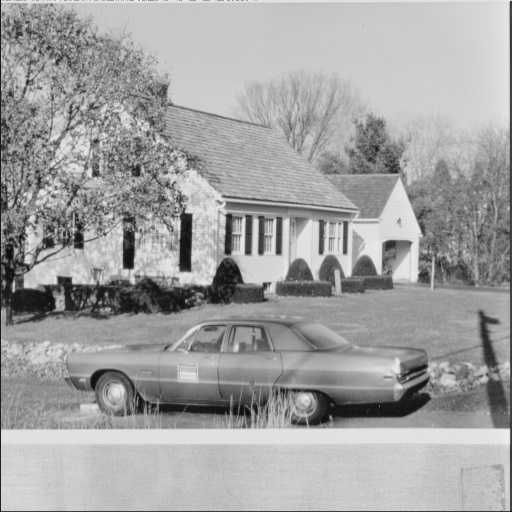}\\
	\vspace{0.05cm}
	\includegraphics[width=3cm]{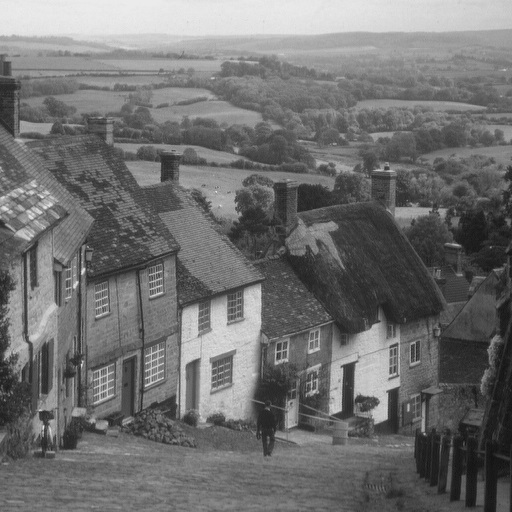}
	\includegraphics[width=3cm]{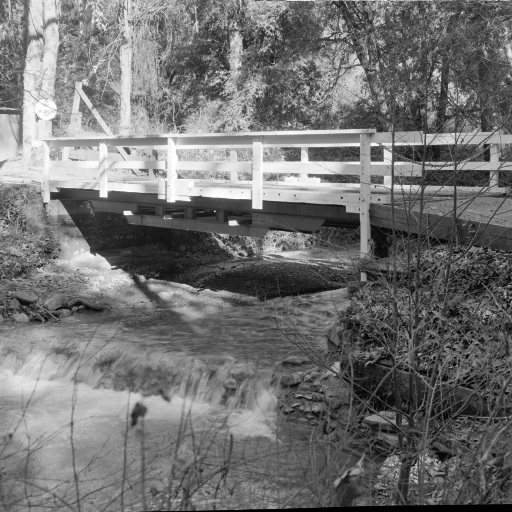}
	\caption{Original images}\label{FigOrigImg}
\end{figure}
 
In the tables I and II we present the PSNR values between the input images and the output provided by Method 1. Table 1 provides results for operators using blocks $2\times 2$ and Table II for blocks $4\times 4$.

\begin{table}[!htb]
\centering
\begin{tabular}{ccccc}
	\hline\
	& $\mathbf{H}$ & $cOW\!A$ &{\it Arith} & {\it Median}\\
	\hline\
	$\!$Img 01 & $29.63$ & $29.66$ & ${\bf 29,71}$ & $29.50$\\
	Img 02 & $33.15$ & $33.14$ & ${\bf 33.18}$ & $33.09$\\
	Img 03 & $29.52$ & $29.53$ & ${\bf 29.57}$ & $29.44$\\
	Img 04 & $31.54$ & $31.54$ & ${\bf 31.61}$ & $31.46$\\
	Img 05 & $27.87$ & $27.88$ & ${\bf 27.91}$ & $27.80$\\
	Img 06 & $40.78$ & $40.78$ & ${\bf 40.79}$ & $40.78$\\
	Img 07 & $27.40$ & $27.42$ & ${\bf 27.47}$ & $27.30$\\
	Img 08 & $26.56$ & $26.57$ & ${\bf 26.61}$ & $26.47$\\
	Img 09 & $28.84$ & $28.85$ & ${\bf 28.89}$ & $28.73$\\
	Img 10 & $24.43$ & $24.45$ & ${\bf 24.53}$ & $24.27$\\
	\hline\
	Avg & $29.97$ & $29.98$ & ${\bf 30.03}$ & $29.88$	
\end{tabular}
\caption{$PSNR$ values after a reduction using the $DYOW\!A$s operators using blocks $2\times 2$}
\end{table}

\begin{table}[!htb]
	\centering
	\begin{tabular}{ccccc}
		\hline\
		& $\mathbf{H}$ & $cOW\!A$ &{\it Arith} & {\it Median}\\
		\hline\
		$\!$Img 01 & $26.34$ & $26.26$ & $26.36$ & ${\bf 26.70}$\\
		Img 02 & $23.64$ & $23.60$ & ${\bf 23.65}$ & $22.78$\\
		Img 03 & $25.55$ & $25.46$ & ${\bf 25.56}$ & $24.84$\\
		Img 04 & $27.53$ & $27.45$ & ${\bf 27.54}$ & $26.86$\\
		Img 05 & ${\bf 24.14}$ & $24.06$ & ${\bf 24.14}$ & $23.28$\\
		Img 06 & $34.39$ & $34.34$ & ${\bf 34.41}$ & $33.83$\\
		Img 07 & $23.98$ & $23.88$ & ${\bf 23.99}$ & $23.19$\\
		Img 08 & ${\bf 23.07}$ & $22.97$ & ${\bf 23.07}$ & $22.18$\\
		Img 09 & $25.78$ & $25.69$ & ${\bf 25.79}$ & $25.05$\\
		Img 10 & ${\bf 21.71}$ & $21.61$ & ${\bf 21.71}$ & $20.62$\\
		\hline\
		Avg & $25.61$ & $25.53$ & ${\bf 25.62}$ & $24.83$
	\end{tabular}
	\caption{$PSNR$ values after a reduction using the $DYOW\!A$s operators using blocks $4\times 4$.}
\end{table}

According to PSNR, $Arith$ provided the higher quality image. However, the reduction operators generated by $\mathbf{H}$ and $cOW\!A $ provide us quite similar images to those given by $Arith$.

Observe that although the Method 1 is very simple, it introduces noise in the resulting image. In what follows we show that the operator $\mathbf{H}$ is suitable to filter images with noise. This is done by using $\mathbf{H}$ to define the weights which are used in the process of convolution. This new process will, then, be used to provide a better comparison in the Method 1.

\section{$DYOW\!A$'s as Tools of Noise Reduction}

In this section we show that the $DYOW\!A$ operators studied in section III can be used to deal with images containing noise.

The methodology employed here consists to analyze the previous images with Gaussian noise $\sigma = 10\%$ and $15\%$; apply a filter built upon the operators $\mathbf{H}$, $cOW\!A$ and $Arith$ based on convolution method (See \cite{Gonzales}), and compare the resulting images with the original one using PSNR.

\begin{table}[!htb]
	\centering
	\begin{tabular}{ccccc}
		\hline\
		& $\mathbf{H}$ & $cOW\!A$ &{\it Arith}  & No Tratament\\
		\hline\
		$\!$Img 01 & ${\bf 30.96}$ & $30.56$ & ${\bf 30.96}$ & $23.83$\\
		Img 02 & ${\bf 28.16}$ & $27.78$ & $27.36$ & $24.36$\\
		Img 03 & ${\bf 31.33}$ & $30.99$ & $31.08$ & $24.23$\\
		Img 04 & ${\bf 32.33}$ & $32.09$ & $32.20$ & $24.48$\\
		Img 05 & ${\bf 30.39}$ & $30.09$ & $30.10$ & $24.06$\\
		Img 06 & $31.66$ & ${\bf 31.69}$ & $31.38$ & $25.73$\\
		Img 07 & ${\bf 28.97}$ & $28.65$ & $28.80$ & $23.93$\\
		Img 08 & ${\bf 28.51}$ & $28.28$ & $28.25$ & $24.02$\\
		Img 09 & ${\bf 30.03}$ & $29.73$ & $30.02$ & $23.91$\\
		Img 10 & ${\bf 25.97}$ & $25.84$ & $25.81$ & $23.76$\\
		\hline\
		Avg & ${\bf 29.83}$ & $29.57$ & $29.60$ & $24.23$\\
	\end{tabular}
	\caption{$PSNR$ values between the output image with original one, in which $\sigma=10\%$}
\end{table}

\begin{table}[!htb]
	\centering
	\begin{tabular}{ccccc}
		\hline\
		& $\mathbf{H}$ & $cOW\!A$ &{\it Arith}  & No Tratament\\
		\hline\
		$\!$Img 01 & $29.88$ & $29.40$ & ${\bf 29.97}$ & $21.19$\\
		Img 02 & ${\bf 27.33}$ & $27.04$ & $26.74$ & $21.48$\\
		Img 03 & ${\bf 29.92}$ & $29.55$ & $29.79$ & $21.32$\\
		Img 04 & ${\bf 30.23}$ & $30.06$ & $30.08$ & $21.51$\\
		Img 05 & ${\bf 29.38}$ & $28.96$ & $29.27$ & $21.30$\\
		Img 06 & $27.95$ & ${\bf 28.03}$ & $27.56$ & $22.36$\\
		Img 07 & ${\bf 28.23}$ & $27.84$ & $28.14$ & $21.26$\\
		Img 08 & ${\bf 27.87}$ & $27.57$ & $27.70$ & $21.28$\\
		Img 09 & $29.17$ & $28.76$ & ${\bf 29.22} $ & $21.25$\\
		Img 10 & ${\bf 25.55}$ & $25.39$ & $25.44$ & $21.41$\\
		\hline\
		Avg & ${\bf 28.55}$ & $28.26$ & $28.39$ & $21.44$\\
	\end{tabular}
	\caption{$PSNR$ values between the output image with original one, in which $\sigma=15\%$.}
\end{table}

Tables III and IV demonstrate the power of $\mathbf{H}$ on images with noise. All listed operators improved significantly the quality of the image with noise. However, $\mathbf{H}$ exceeded all other analyzed.

Figure 4 shows an example of a image with Gaussian noise $\sigma=15\%$ and the Figure 5 the output image after applying the filter of convolution using $\mathbf{H}$.

\begin{figure}[!htb]
	\centering
	\includegraphics[width=5.0cm]{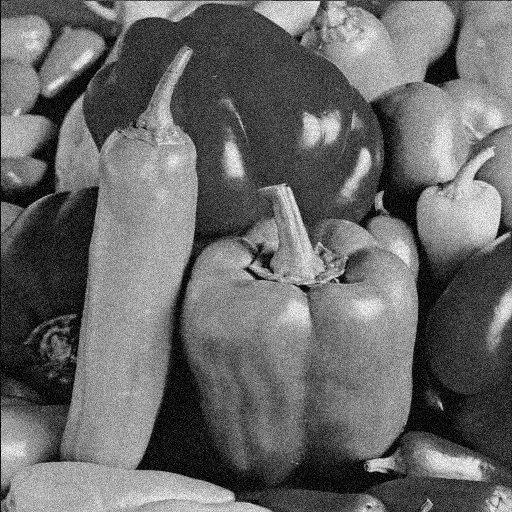}
	\caption{Image 03 with Gaussian noise $\sigma=15\%$.}
\end{figure}

\begin{figure}[!htb]
	\centering
	\includegraphics[width=5.0cm]{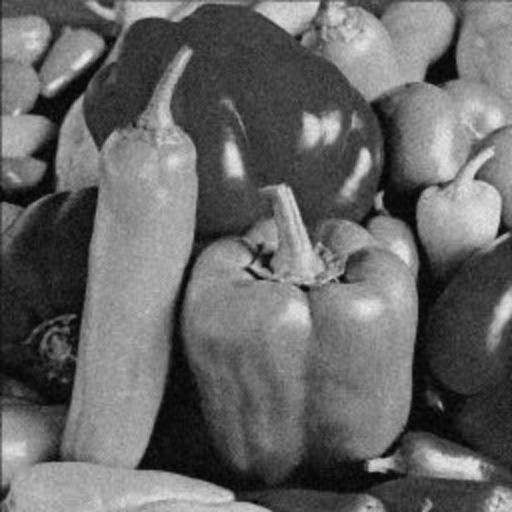}
	\caption{Figure 4 after being treated with $H$ by convolution.}
\end{figure}

The reader can see in tables III and IV that $\mathbf{H}$ proved to be an excellent operator for noise reduction. 

In what follows, we modify the Method 1 in order to provide a better magnified image to be compare with the original one.

\vspace{1em}
\hrule
\begin{center}
	{\bf Method 1'}
\end{center}
\hrule
\begin{enumerate}
	\item Reduce the input images using the $\mathbf{H}$, $cOW\!A$, {\it Arithmetic Mean} and {\it Median};
	\item (A) Magnify the reduced image to the size of the original image using the method described in \hyperlink{exe13}{Example 13}, and (B) Use the convolution filter, using $\mathbf{H}$, on the last image;
	\item Compare the last image with the original one using the measure $PSNR$.
\end{enumerate}
\hrule
\vspace{1em}

Tables V and VI show the obtained results:

\begin{table}[!htb]
	\centering
	\begin{tabular}{ccccc}
		\hline\
		& $\mathbf{H}$ & $cOW\!A$ &{\it Arith} & {\it Median}\\
		\hline\
		$\!$Img 01 & $30.60$ & $30.55$ & ${\bf 30.62}$ & $30.50$\\
		Img 02 & $28.68$ & $28.67$ & ${\bf 28.70}$ & $28.68$\\
		Img 03 & ${\bf 30.89}$ & $30.83$ & $30.85$ & $30.84$\\
		Img 04 & ${\bf 32.74}$ & $32.71$ & $32.71$ & $32.71$\\
		Img 05 & ${\bf 29.22}$ & $29.18$ & $29.16$ & $29.19$\\
		Img 06 & $43.88$ & $43.80$ & ${\bf 43.93}$ & $43.90$\\
		Img 07 & ${\bf 28.05}$ & $28.01$ & $28.03$ & $27.95$\\
		Img 08 & ${\bf 27.39}$ & $27.37$ & $27.37$ & $27.36$\\
		Img 09 & $29.36$ & $29.35$ & ${\bf 29.40}$ & $29.30$\\
		Img 10 & $24.94$ & $24.92$ & ${\bf 24.95}$ & $24.83$\\
		\hline\
		Avg & ${\bf 30.57}$ & $30.54$ & ${\bf 30.57}$ & $30.53$
	\end{tabular}
	\caption{$PSNR$ between the original image and the magnified image from the image reduced by blocks $2\times 2$.}
\end{table}

\begin{table}[!htb]
	\centering
	\begin{tabular}{ccccc}
		\hline\
		& $\mathbf{H}$ & $cOW\!A$ &{\it Arith} & {\it Median}\\
		\hline\
		$\!$Img 01 & ${\bf 27.44}$ & $27.41$ & $27.41$ & $27.01$\\
		Img 02 & ${\bf 23.91}$ & $23.88$ & ${\bf 23.91}$ & $23.24$\\
		Img 03 & ${\bf 26.86}$ & $26.85$ & $26.85$ & $26.28$\\
		Img 04 & ${\bf 28.87}$ & $28.86$ & $28.85$ & $28.39$\\
		Img 05 & ${\bf 25.15}$ & ${\bf 25.15}$ & $25.12$ & $24.64$\\
		Img 06 & ${\bf 28.13}$ & $28.05$ & ${\bf 28.13}$ & $27.04$\\
		Img 07 & ${\bf 24.68}$ & $24.63$ & ${\bf 24.68}$ & $24.13$\\
		Img 08 & ${\bf 23.78}$ & $23.76$ & ${\bf 23.78}$ & $23.14$\\
		Img 09 & ${\bf 26.45}$ & $26.40$ & ${\bf 26.45}$ & $25.92$\\
		Img 10 & ${\bf 22.27}$ & $22.21$ & $22.26$ & $21.49$\\
		\hline\
		Avg & ${\bf 25.75}$ & $25.72 $ & $ 25.74$ & $25.12$
	\end{tabular}
	\caption{$PSNR$ between the original image and the magnified image from the image reduced by blocks $4\times 4$}
\end{table}

Since the output of convolution using $\mathbf{H}$ is closer to the original input image, the tables V and VI show that the process of reduction using $\mathbf{H}$ is more efficient.

\section{Final Remarks}

In this paper we propose a generalized form of Ordered Weighted Averaging function, called \textbf{Dynamic Ordered Weighted Averaging} function or simply \textbf{DYOWA}. This functions are defined by weights, which are obtained dynamically from of each input vector ${\bf x}\in [0,1]^n$. We demonstrate, among other results, that $OW\!A$ functions are instances of $DYOW\!A$s, and, hence, functions like: {\it Arithmetic Mean}, {\it Median}, {\it Maximum}, {\it Minimum} and $cOW\!A$ are also examples of $DYOW\!A$.

In the second part of this work we present a particular $DYOW\!A$, called of $\mathbf{H}$, and show that it is idempotent, symmetric, homogeneous, shift-invariant, and moreover, it has no zero divisors and one divisors, and also does not have neutral elements. Since aggregation functions which satisfy these properties are extensively used in image processing, we tested its usefulness to: (1) reduce the size of images and (2) deal with noise in images.

In terms of image reduction, Method 1 showed a weakness, since it adds noise during the process of magnification. However, the treatment of noise with function \textbf{H} improved the magnification step providing an evidence that the function \textbf{H} is more efficient to perform the image reduction process.

\begin{figure}[!htb]
	\centering
	\includegraphics[width=6.5cm]{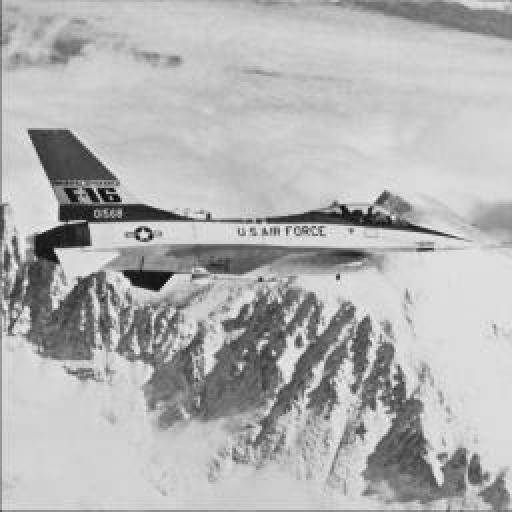}
	\caption{Magnification of image 06 reduce by bloks $2\times 2$ using the operator $H$ by Method 1}
\end{figure}

\begin{figure}[!htb]
	\centering
	\includegraphics[width=6.5cm]{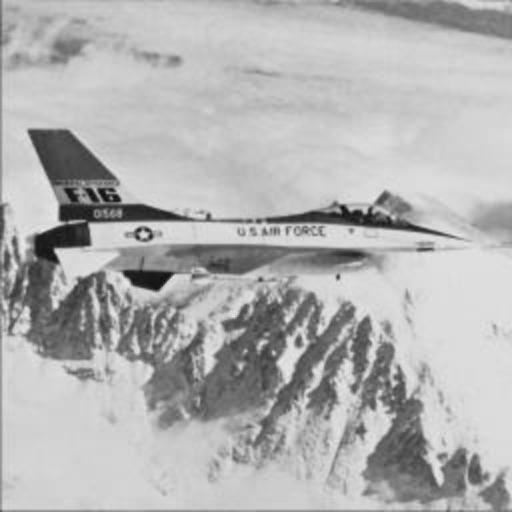}
	\caption{Magnification of image 06 reduce by bloks $2\times 2$ using the operator $H$ by Method 1'}
\end{figure}

\begin{figure}[!htb]
	\centering
	\includegraphics[width=6.5cm]{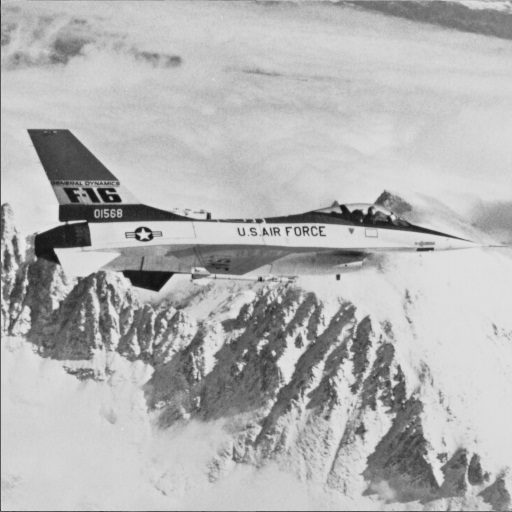}
	\caption{Image 06}
\end{figure}

\newpage

\begin{figure}[!htb]
	\centering
	\includegraphics[width=6.5cm]{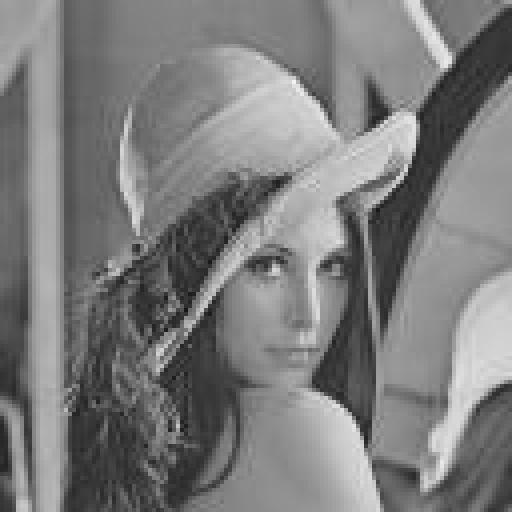}
	\caption{Magnification of image 01 reduce by bloks $4\times 4$ using the operator $H$ by Method 1}
\end{figure}

\begin{figure}[!htb]
	\centering
	\includegraphics[width=6.5cm]{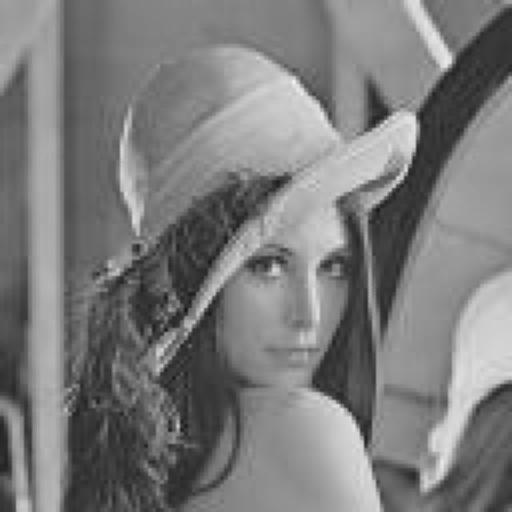}
	\caption{Magnification of image 01 reduce by bloks $4\times 4$ using the operator $H$ by Method 1'}
\end{figure}

\begin{figure}[!htb]
	\centering
	\includegraphics[width=6.5cm]{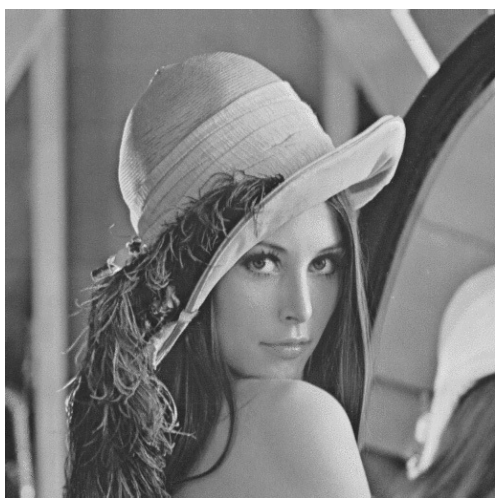}
	\caption{Image 01}
\end{figure}

\hfill\\
\newpage

\end{document}